\documentclass[12pt]{article}

\usepackage{ulem}
\normalem

\usepackage{mysty}
\usepackage{multirow}
\usepackage{ctable}
\newcommand{\gap}{\mathrm{Gap}}
\newcommand{\cert}{Z}
\newcommand{\bcert}{\bar Z_t}
\newcommand{\zmstar}{Z^\star}
\usepackage[noindentafter]{titlesec}
\usepackage{bold-extra}

\usepackage{float}

\definecolor{cornellred}{rgb}{0.7, 0.11, 0.11}
\definecolor{bostonuniversityred}{rgb}{0.8, 0.0, 0.0}
\definecolor{amaranth}{rgb}{0.9, 0.17, 0.31}

\newcommand{\rev}[1]{{#1}}

\newcommand{\finalrev}[1]{#1}

\usepackage{natbib}
\usepackage{url} 

\newcommand{\blind}{0}

\addtolength{\oddsidemargin}{-.5in}%
\addtolength{\evensidemargin}{-.5in}%
\addtolength{\textwidth}{1in}%
\addtolength{\textheight}{1.3in}%
\addtolength{\topmargin}{-.8in}%

\usepackage{calc}
\newsavebox\CBox
\newcommand\hcancel[2][0.5pt]{%
 \ifmmode\sbox\CBox{$#2$}\else\sbox\CBox{#2}\fi%
 \makebox[0pt][l]{\usebox\CBox}%
 \rule[0.5\ht\CBox-#1/2]{\wd\CBox}{#1}}

 \usepackage{amssymb,amsmath,amsthm,enumitem}

\begin{document}

\def\spacingset#1{\renewcommand{\baselinestretch}%
{#1}\small\normalsize} \spacingset{1}

\if0\blind
{
 \title{\bf Variable Screening for Sparse Online Regression}
 \author{Jingwei Liang\hspace{.2cm}\\
 Institute of Natural Sciences, Shanghai Jiao Tong University\\
 and \\
 Clarice Poon \\
 Department of Mathematical Sciences, University of Bath}
 \maketitle
} \fi

\if1\blind
{
 \bigskip
 \bigskip
 \bigskip
 \begin{center}
 {\LARGE\bf Title}
\end{center}
 \medskip
} \fi

\bigskip
\begin{abstract}
 Sparsity-promoting regularizers are widely used to impose low-complexity structure (e.g. $\ell_1$-norm for sparsity) to the regression coefficients of supervised learning. In the realm of deterministic optimization, the sequence generated by iterative algorithms (such as proximal gradient descent) exhibit ``finite activity identification" property, that is, they can identify the low-complexity structure of the solution in a finite number of iterations. However, many online algorithms (such as proximal stochastic gradient descent) do not have this property owing to the vanishing step-size and non-vanishing variance. In this paper, by combining with a screening rule, we show how to eliminate useless features of the iterates generated by online algorithms, and thereby enforce finite sparsity identification. One advantage of our scheme is that when combined with any convergent online algorithm, sparsity properties imposed by the regularizer can be exploited to improve computational efficiency. Numerically, significant acceleration can be obtained.
 \end{abstract}

\noindent%
{\it Keywords:} Non-smooth regularization, sparsity promoting regularization, stochastic gradient descent, screening rules, finite activity identification
\vfill

\newpage
\spacingset{1.5} 

\section{Introduction}

\subsection{Background}

Regression plays a fundamental role in various fields including machine learning, statistics and data science. Meanwhile, sparse regularizations, such as $\ell_1$-norm regularization, have been increasingly popular in recent years. 
In this paper, we are interested in the following sparsity-promoting regression problem
\begin{equation}\label{eq:onlineprob}\tag{$P_{\lambda}$}
\min_{\beta \in\RR^n} 
~ \bBa{ P_{\lambda}(\beta) = \lambda \Omega(\beta) + F(\beta) } , \qwhereq F(\beta) \eqdef \EE_{(x,y)} [ f( x^\top \beta; y) ].
\end{equation}
The expectation is taken over random variable $(x,y)$ whose probability distribution $\Lambda$ is supported on some compact domain $\Xx\times \Yy\subset \RR^n\times \RR$ and $\lambda>0$ is a trade-off parameter to balance the loss $F$ and the sparsity promoting regularizer $\Omega$.

Popular choices of the loss function $f$ include the squared loss, logistic loss and the squared hinge loss, while popular choices for $\Omega$ include the $\ell_1$-norm for enforcing sparsity~\cite{tibshirani1996regression}, the $\ell_{1,2}$-norm for enforcing group sparsity~\cite{yuan2006model} and the $\ell_1+\ell_{1,2}$-norms for enforcing sparsity within groups~\cite{simon2013sparse}.
Throughout this paper, we assume the following basic assumptions:
\begin{enumerate}[leftmargin=4em,label= ({\textbf{H.\arabic{*}})},ref= \textbf{H.\arabic{*}}]
\item \label{A:F} 
For each $y$, $f_y \eqdef f(\cdot; y) : \RR\to\RR$ is convex, differentiable, and has $L$-Lipschitz continuous gradient for some $L>0$.
\item \label{A:Omega} 
The regularization function $\Omega: \RR^n\to [0,+\infty)$ is a convex and group decomposable norm \rev{(with non-overlapping groups)}. That is, given $\beta\in\RR^n$ and a partition $\Gg$ on $\ens{1,\ldots, n}$ such that $\beta = (\beta_g)_{g\in\Gg}$, we have 
\[\Omega(\beta) \eqdef \msum_{g\in\Gg}\Omega_g(\beta_g)\] 
where $\Omega_g$ is a norm on $\RR^{n_g}$ with $n_g$ being the cardinality of $\beta_g$.
\end{enumerate}

\paragraph{Empirical loss minimization}
In practice, instead of minimizing the loss function $F$ over the distribution $\Lambda$, one can draw samples from $\Lambda$ and deals with the {\em empirical loss} 
\[
F_\eta(\beta) \eqdef \msum_{i=1}^m \eta_i f(x_i^\top \beta,y_i)
\]
where $m$ samples $\ens{x_i,y_i}_{i=1}^m \in (\RR^n)^m\times \RR^m$ are drawn from $\Lambda$, with positive weights $\eta_i$ which sum to 1. A popular choice of $(\eta_i)_{i}$ is uniform weights, \ie $\eta_i \equiv \frac{1}{m} $. 
Correspondingly, \eqref{eq:onlineprob} becomes the following regularized empirical loss
\begin{equation}\label{eq:finitesum}\tag{$P_{\lambda,\eta}$}
\min_{\beta \in \bbR^n}~ \bBa{ P_{\lambda,\eta}(\beta) = \lambda \Omega(\beta) + \msum_{i=1}^m \eta_i f(x_i^\top \beta,y_i) } . 
\end{equation}

\subsubsection{Dimension reduction via (safe) screening}
The purpose of using sparsity-promoting regularizers is so that the solution of the optimization problem \eqref{eq:finitesum} has as few non-zeros coefficients as possible. 
In high dimensional statistics, (safe) screening techniques are popular approaches for filtering out features whose corresponding coefficients are $0$, hence achieving dimension reduction; See~\cite{ghaoui2010safe,tibshirani2012strong,ndiaye2017gap} and the references therein. 
Safe feature elimination was first proposed by 
\cite{ghaoui2010safe} for $\ell_1$-norm regularized problems. The rules introduced were static, where features are screened out as a preprocessing step, and sequential rules where one solves a sequence of optimization problems with a decreasing list of parameters $\ens{\lambda_j}_{j}$, so that solutions of an optimization problem with $\lambda_{k+1}$ are used to screen out features when solving with $\lambda_k$. Since this work, several extensions have been proposed~\cite{liu2014safe, wang2014safe,xiang2011learning}.

Dynamic screening rules were later proposed by~\cite{bonnefoy2015dynamic}, where the safe screening regions are updated along the iterates of a solver. Another work in this direction is the so-called gap-safe rules~\cite{ndiaye2015gap, ndiaye2017gap} where the calculation of the safe regions along the iterates are done via primal-dual duality gap. The presented work is largely inspired by~\cite{ndiaye2017gap}, where we dynamically construct safe regions by computing an ``online'' primal-dual duality gap.

\subsection{Our contributions}
Though screening techniques are algorithm agnostic, they have been investigated mostly for the deterministic or batched algorithms (where one evaluates the full gradient $\nabla F$ or $\nabla F_\eta$ at each iteration).
For large-scale problems, batched methods (\eg proximal gradient descent) may be impractical and it is preferable to use online ones~\cite{bottou2004large}: 
that at each iteration step $t$, draw a sample $(x_t,y_t)$ randomly from the distribution $\Lambda$ and perform the update
 \begin{equation}\label{eq:sgd}
 \beta_{t+1} = \beta_t - \gamma_t\Pa{ f_{y_t}'(x_t^\top \beta_t) x_t + \lambda Z_t}
 \end{equation}
 where $Z_t \in \partial \Omega(\beta_t)$ is a subgradient (see Eq. \eqref{eq:sub-diff}). This is a special instance of stochastic gradient descent and dates back to
\cite{robbins1951stochastic,kiefer1952stochastic}.

To deal with the non-smoothness imposed by the regularizer $\Omega$, various stochastic schemes are proposed in the literature, such as \textit{truncated gradient}~\cite{langford2009sparse} or stochastic versions of proximal gradient descent~\cite{duchi2009efficient} (Prox-SGD) 
\[
\beta_{t+1} = \prox_{\lambda\gamma_t \Omega} \Pa{ \beta_t - \gamma_t f_{y_t}'(x_t^\top \beta_t) x_t } ,
\]
where $\prox_{\gamma\Omega}(\cdot) = \argmin_{\beta} \gamma \Omega(\beta)+ \frac12 \norm{\beta - \cdot}^2$ is called the {\it proximal mapping} of $\Omega$, 
and has closed form expressions for the above sparsity promoting regularizers~\cite{combettes2011proximal}. 
Note that Prox-SGD is equivalent to \eqref{eq:sgd} by taking $Z_t \in \partial \Omega(\beta_{t+1})$.
However, for the standard Prox-SGD, due to the vanishing step-size $\gamma_t$ and non-vanishing variance in the stochastic gradient estimates, the generated sequence $\seq{\beta_t}$ tends to have full support for all $t$ even though the sought after solution is sparse~\cite{xiao2010dual,lee2012manifold}; see also~\cite{poon2018local} for an explicit example. As a result, one cannot easily exploit the sparsity-promoting structure of $\Omega$ for computational gains.

In this paper, we address the non-sparseness problem (\ie no support identification) of online algorithms by combining them with the idea of safe screening rules~\cite{ghaoui2010safe,ndiaye2017gap}. More precisely, our contributions are as follows.
\begin{enumerate}
\item[{\rm (i)}] By adapting gap-safe screening rules of~\cite{ndiaye2017gap} to online algorithms, we propose an online-screening rule. The proposed rule only needs to evaluate function values at the sampled data, hence has low per iteration complexity. 
In particular, we show how to construct a ``dual certificate'' along the iterations which allows us to apply gap-safe rules to screen out certain features. 
Moreover, this certificate can be built alongside any convergent online algorithms. 

\item[{\rm (ii)}] The consequence of screening rules for online optimization is support identification of $\beta_t$, \ie dimension reduction, which allows us to locate features of interests. 
More importantly, significant computational gains can be obtained as per iteration complexity scales from $n$ (the dimension of the variable $\beta$) to $\kappa$ (the sparsity of the solution~$\beta^\star$). 

\end{enumerate}

\begin{remark}
An interesting feature of many batched optimization methods such as proximal gradient descent~\cite{lewis2016proximal,lewis2002active,liang2017activity} and coordinate descent~\cite{klopfenstein2020model}, is that they exhibit ``finite activity identification", where after a finite number of iterations, all iterates will have the same sparsity structure as the solution. Although one cannot check a-priori whether activity identification has been achieved, one can heuristically exploit this property for computational gains by switching to higher order methods once the support is sufficiently small.

\finalrev{When the optimization problem has a finite sums structure \eqref{eq:finitesum}, one can consider variance-reduced stochastic methods, such as proximal-SAGA and proximal-SVRG. These methods also enjoy   finite activity identification properties ~\cite{poon2018local} and  just as for the deterministic methods,  one can also heuristically exploit this property for computational gains. Moreover,  screening rules (such as Algorithm \ref{algo:full-screen}) can be applied and this leads to substantial performance gains in practice.} 

One the other hand, while activity identification is not present in many online algorithms (with the exception of regularized dual averaging~\cite{xiao2010dual}), we make use of screening in this work to enforce such a property. This idea is inspired by the recent work of~\cite{sun2020safe}, where the authors developed a gap-safe rule for conditional gradient descent. One highlight of their work is that through safe screening, identification is achieved whereas simply running conditional gradient descent will never do. In this work, we combine this idea with the gap-safe rules~\cite{ndiaye2017gap} to tackle the stochastic setting, where we dynamically construct safe regions by computing an ``online'' duality gap. 
\end{remark}

\paragraph{Paper organization}
The rest of the paper is organized as follows. 
We recall the basics of screening rules for sparsity-promoting regression problems in Section \ref{sec:screening}. Theoretical analysis of our online-screening rule is presented in Section \ref{sec:online}. 
Numerical experiments on LASSO and sparse logistic regression problems are provided in Section \ref{sec:experiments}. Finally, in the appendix we collect some basics of convex analysis and the proofs of the main theorems.

\section{Safe screening for sparse regularization}
\label{sec:screening}

Before introducing our algorithm, we first provide some background on screening rules, with particular focus on the gap-safe rule from~\cite{ndiaye2017gap}. 
Given a sparsity-promoting norm $\Omega$, its dual norm and sub-differential are respectively defined by
\[
\begin{aligned}
\Omega^D(Z) &\eqdef \sup\nolimits_{\Omega(\beta) \leq 1} \dotp{\beta}{Z} \qandq 
\partial \Omega(\beta) \eqdef \enscond{Z}{\dotp{Z}{\beta} = \Omega(\beta), \enskip \Omega^D(Z) \leq 1} .
\end{aligned}
\]
Note that if $\Omega$ is group decomposable, then we have $\Omega^D(Z) = \sup_{g\in\Gg}\Omega_g^D(Z).$

\subsection{Safe screening}
Below we summarize a few key facts about the support of solutions to \eqref{eq:finitesum}, and refer to~\cite{hastie2015statistical,vaiter2015model} for further details.
Let $\beta^\star$ be a global minimizer of the regularized empirical loss \eqref{eq:finitesum}, the first-order optimality condition entails
\beq\label{eq:opt-cnd}
0 \in \nabla F_\eta(\beta^\star) + \lambda \partial \Omega(\beta^\star) .
\eeq
This is equivalent to saying that $\beta^\star$ is a minimizer if and only if
\begin{equation}\label{eq:Zstarm}
\zmstar \eqdef - \qfrac{1}{\lambda } \msum_{i=1}^m \eta_i \theta_i^\star x_i 
\in \partial \Omega(\beta^\star)
\enskip\mathrm{with}\enskip
\theta_i^\star = \nabla f_i(x_i^\top \beta^\star, y_i),~ \; i=1,...,m .
\end{equation}
The fundamental idea behind screening rule comes from the above optimality condition, that given any sub-group $g \in \Gg$, we have
\beq\label{eq:one_direction}
\Omega_g^D(Z_g^\star) < 1 
\quad\Longrightarrow\quad
\beta_g^\star = 0 .
\eeq
The converse is also true under the non-degeneracy condition 
$
0 \in \ri \Pa{ \nabla F_\eta(\beta^\star) + \lambda \partial \Omega(\beta^\star) } $, where $\ri(\cdot)$ denotes the relative interior~\cite{hastie2015statistical,vaiter2015model}.

Take the $\ell_1$-norm for example, \ie $\Omega(\beta) = \norm{\beta}_1$, this means that $\lambda^{-1}\nabla F_\eta(\beta^\star)$ takes value $\pm 1$ only on the support of $\beta^\star$.
Moreover, $\theta^\star$ in \eqref{eq:Zstarm} is precisely the solution of the {\em dual problem} of \eqref{eq:finitesum}
\begin{equation}\label{eq:finitesumdual}\tag{$D_{\lambda,\eta}$}
\max_{\theta\in\Kk_{\lambda,\eta}} \bBa{ D_{\lambda,\eta}(\theta) \eqdef -\msum_{i=1}^m \eta_i f_{y_{i}}^*( \theta_i) }
\end{equation}
where $$\Kk_{\lambda,\eta} \eqdef \enscond{\theta}{\Omega^D\Pa{ \ssum_{i=1}^m \theta_i \eta_i x_{i} } \leq \lambda} \subset \bbR^{m}$$ is the dual constraint set. Since the vector $Z^\star$ certifies the support of $\beta^\star$, it is hence called the \textit{dual certificate}. 
The above message implies that, if $\zmstar$ is known, we can identify an index set $\calI$ which includes the support of the solution, that is 
\[
\mathrm{supp}(\beta^\star) \subseteq \calI \eqdef \enscond{g\in\Gg}{\Omega_g^D(\zmstar) = 1} . 
\] 
Consequently, one can restrict to optimization over $\beta_\calI \in \RR^{\abs{\calI}}$ instead. This can lead to huge computation gains if $\calI$ tightly estimates the true support $\mathrm{supp}(\beta^\star)$ which very often is much smaller than the dimension of the problem.

In general, computing $\theta^\star$ (hence $\zmstar$) is generally as difficult as finding $\beta^\star$. However, the  entries where $\zmstar$ saturates (takes values $\pm 1$) can be estimated more readily. 
This is exactly the idea of safe screening, which constructs a ``safe region" $\Rr_{\theta}$ such that $\theta^\star \in \Rr_{\theta}$. Then instead of using \eqref{eq:one_direction} to determine the zero entries of $\beta^\star$, one can consider the relaxed criteria: first let $\Zz \eqdef \Ba{ Z ~|~ Z = - \frac{1}{\lambda } \sum_{i=1}^m \eta_i \theta_i x_i,~\forall \theta \in \Rr_{\theta} }$, then $\beta_g^\star = 0$ if 
$\sup_{Z\in \Zz} \Omega_g^D(Z) < 1 .$
For the rest of the paper, we shall call $\Zz$ as the safe region. 
The following result, which can be found in~\cite{ghaoui2010safe}, illustrates how to perform screening rules based on a safe region $\Zz$. Let $\theta_{c}$ be the center of $\Rr_{\theta}$ and $Z_c = - \frac{1}{\lambda } \sum_{i=1}^m \eta_i \theta_{c, i} x_i \in \Zz$.

\begin{proposition}[Safe screen rule]\label{prop:rule}
Let $\beta^\star$ be a minimizer to \eqref{eq:finitesum} and suppose that 
\[
\zmstar \in \Zz\eqdef \enscond{Z}{\Omega^D_g(Z-Z_c) \leq r_g, \quad g\in\Gg}.
\]
Then, $\beta_g^\star = 0$ if $1 - \Omega_g^D(Z_c) > r_g$. 
\end{proposition}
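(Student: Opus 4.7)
The plan is to combine the one-directional implication \eqref{eq:one_direction} with a triangle inequality on the dual norm $\Omega_g^D$. Since $\Omega$ is group decomposable, its subdifferential decomposes groupwise: $\partial\Omega(\beta) = \prod_{g\in\Gg} \partial\Omega_g(\beta_g)$, and the implication $\Omega_g^D(\zmstar_g) < 1 \Rightarrow \beta_g^\star = 0$ is already recorded in \eqref{eq:one_direction}. The whole task therefore reduces to showing that the hypothesis $1 - \Omega_g^D(c) > r_g$ forces $\Omega_g^D(\zmstar_g) < 1$.

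First I would unpack the definition of the safe region. Since $\zmstar \in \Zz$, we have $\Omega_g^D(\zmstar - c) \leq r_g$ for every $g \in \Gg$ (interpreted on the $g$-block, consistent with the paper's convention). Next I would apply the triangle inequality of the norm $\Omega_g^D$ to write
\[
\Omega_g^D(\zmstar) \;=\; \Omega_g^D\bigl(c + (\zmstar - c)\bigr) \;\leq\; \Omega_g^D(c) + \Omega_g^D(\zmstar - c) \;\leq\; \Omega_g^D(c) + r_g.
\]
The screening hypothesis $1 - \Omega_g^D(c) > r_g$ is exactly the statement that $\Omega_g^D(c) + r_g < 1$, and combining with the previous display yields $\Omega_g^D(\zmstar_g) < 1$.

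Finally, I would invoke \eqref{eq:one_direction} to conclude $\beta_g^\star = 0$. Because $\zmstar \in \partial\Omega(\beta^\star)$ by the optimality condition \eqref{eq:Zstarm} and $\partial\Omega$ decomposes groupwise, the restriction $\zmstar_g \in \partial\Omega_g(\beta_g^\star)$ with $\Omega_g^D(\zmstar_g) < 1$ implies $\beta_g^\star = 0$ (otherwise $\dotp{\zmstar_g}{\beta_g^\star} = \Omega_g(\beta_g^\star)$ together with $\Omega_g^D(\zmstar_g) < 1$ would give the contradiction $\Omega_g(\beta_g^\star) < \Omega_g(\beta_g^\star)$ via the dual-norm inequality).

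There is no real obstacle here: the result is essentially a one-line triangle-inequality argument, and the only thing to be slightly careful about is the notational convention that $\Omega_g^D$ applied to a vector in $\RR^n$ means $\Omega_g^D$ applied to its $g$-block. The content of the proposition is thus entirely in how to exploit the inclusion $\zmstar \in \Zz$ to relax the sharp threshold $\Omega_g^D(\zmstar_g) < 1$ into the computable threshold $\Omega_g^D(c) + r_g < 1$.
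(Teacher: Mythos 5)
Your proof is correct and follows essentially the same route as the paper: apply the triangle inequality for $\Omega_g^D$ to bound $\Omega_g^D(\zmstar)$ by $\Omega_g^D(\zmstar - c) + \Omega_g^D(c) \leq r_g + \Omega_g^D(c) < 1$, then invoke the implication \eqref{eq:one_direction}. The extra justification you give for \eqref{eq:one_direction} via the dual-norm inequality is a harmless elaboration of a fact the paper takes as given.
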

%

{\noindent}There are several ground rules for constructing a safe region $\Zz$:
\begin{itemize}
\item The supremum of the dual norm over the safe region, \ie $\sup_{Z\in \Zz} \Omega_g^D(Z)$, is easy to compute. 
\item The size of the safe region should be as small as possible: as the most trivial safe region is the whole space which screens out nothing, while the best one is $\Zz = \{\zmstar\}$ which screens out all useless features. 
\end{itemize}

In the literature, various safe regions have been proposed. The very first safe screening work by~\cite{ghaoui2010safe} introduced the idea of static screening and sequential screening. For {static safe screening}, screening is only implemented as a pre-processing of data, hence it is crucial to construct a good safe region such that the amount of discarded features is as many as possible. If we have a finite sequence of regularization parameter $\lambda_{j}$ for $ j=0, ..., J$ such that $\lambda_0 \geq \lambda_1 \geq \cdots \geq \lambda_J = \lambda $. Then static screening can be applied to each $\lambda_j$ which results in sequential screening. For both static and sequential screening, the volume of the safe regions is always bounded way from $0$ which limits the potential of screening. In addition to the safe region proposed in~\cite{ghaoui2010safe}, other safe regions include dual polytope projection safe sphere~\cite{wang2013lasso} and { safe dome}~\cite{xiang2016screening}. Dynamic screening rules were later proposed in~\cite{ndiaye2015gap,ndiaye2017gap}, where they combine screening rules and numerical methods such that the constructed safe regions are generated by the sequence of the numerical scheme. As a result, the safe region can eventually converge to the dual certificate and screen out all useless features. Our approach will follow the idea of dynamic screening.

\subsubsection{Gap-safe screening}

\renewcommand{\algorithmiccomment}[1]{{\color{blue}//#1}}

\begin{algorithm}[!h]
\SetAlgoLined
Given: $T>0$, step-size $\ens{\gamma_t}_{t\in\NN}$\;
 initialization $t=1$, $\bar{\beta}_0\in\RR^n$\; 
 \While{not terminate}{
 $\beta_{0} = \bar{\beta}_{t-1}$ \tcp*{\tcb{set an anchor point}}

	\For{$j=0,\ldots, T-1$}{
 		Sample $(x_j, y_j)$ \rev{$\sim \Lambda$} \tcp*{\tcb{random sampling}} 
 		$\beta_{j} = \Tt(\beta_{j-1}, f_{y_j}'( x_j^\top \beta_{j-1}) x_j, \gamma_j)$ \tcp*{\tcb{standard gradient update}}
 		$j=j+1$;
 }
 $\bar{\beta}_t = \beta_T, \bar{\theta}_t = ( f_{y_i}'(x_i^\top \bar{\beta}_t))_{i=1}^{m}$ \tcp*{\tcb{primal and dual variables}}
 Compute safe centre $Z$ and radius $r_g$ \tcp*{\tcb{e.g. as in \eqref{eq:gap-certificate} and \eqref{eq:radius} }}
 
	$\Ss = \enscond{g\in\Gg}{\Omega^D_g(Z) < 1- r_g }$ \tcp*{\tcb{screening set}}
	$(\bar{\beta}_t)_{\Ss} = 0$ \tcp*{\tcb{pruning the primal point}}
	 $t = t + 1$\;
 }
 \caption{Safe screening for finite sum problem \label{algo:full-screen}}
\end{algorithm}

In a series of work~\cite{ndiaye2015gap,ndiaye2016gap,ndiaye2017gap}, the authors develop a gap-safe rule for screening, where the ``gap'' refers to the duality gap between the primal function \eqref{eq:finitesum} and dual function \eqref{eq:finitesumdual}. 
For any $\beta \in \bbR^n$ and $\theta \in \Kk_{\lambda,\eta}$, the duality gap is defined by
\[
\begin{aligned}
G_{\lambda,\eta}(\beta,\theta)
&= P_{\lambda,\eta}(\beta) - D_{\lambda,\eta}(\theta) .
\end{aligned}
\]
Let $\beta^\star$ and $\theta^\star$ be a primal and dual solution respectively, then strong duality holds and
\[
D_{\lambda,\eta}(\theta) \leq D_{\lambda,\eta}(\theta^\star)
=
P_{\lambda,\eta}(\beta^\star) \leq P_{\lambda,\eta}(\beta) ,\quad \forall \beta\in\RR^n,~~\theta\in\RR^m . 
\]
As a result, the duality gap $G_{\lambda,\eta}(\beta,\theta)$ is always non-negative.

Since the loss function is differentiable with $L$-Lipschitz continuous gradient, the dual problem $D_{\lambda,\eta}(\theta)$ is $\mu$-strongly concave with $\mu = 1/L$. Then for any $\beta \in \RR^n$ and $\theta \in \Kk_{\lambda,\eta}$, 
$
\sfrac{\rev{\mu \lambda^2}}{2}\norm{\theta-\theta^\star}^2
\leq P_{\lambda,\eta}(\beta) - D_{\lambda,\eta}(\theta) 
$~\citep[Theorem 6]{ndiaye2017gap}.
Therefore, letting 
\begin{equation}\label{eq:radius}
r_t \eqdef \rev{ \sqrt{2G_{\lambda,\eta}(\beta_t, \theta_t)/(\mu\lambda^2)}},
\end{equation}
 one obtains the following safe sphere: 
\[
\Zz \eqdef \bBa{ - \sfrac{1}{\lambda } \msum_{i=1}^m \eta_i \theta_i^\star x_i , \enskip \forall \theta \in \Rr_{\theta} } 
\qwithq
\Rr_{\theta} \eqdef \Ba{ \theta \in \RR^m : \norm{\theta - \theta_t} \leq r_t } .
\]
Now given a numerical scheme, at each iteration, $\beta_t$ is explicitly available and one can compute a dual feasible variable $\theta_t$ by projecting $\bar \theta_t \eqdef (f'_{y_i}(x_i^\top \beta_t))_{i=1}^{m}$ on to the dual feasible set $\Kk_{\lambda,\eta}$.
In particular, define
\begin{equation}\label{eq:gap-certificate}
Z\eqdef c^{-1} \msum_i \eta_i (\bar \theta_t)_i x_i 
\qwhereq 
c \eqdef \max \Big\{ 1, \Omega^D\Pa{\sfrac{1}{\lambda} \msum_{i=1}^m \eta_i (\bar \theta_t)_i x_i} \Big\}.
\end{equation}
It follows by using Holder's inequality and the fact that $\Omega^D$ is positive homogeneous that the distance from  $Z$  to the true dual solution $Z^\star$ (see \eqref{eq:Zstarm}) is bounded by
\[
\forall g\in \Gg,\quad
\Omega_g^D(Z-Z^\star) \leq \norm{\theta_t-\theta^\star} \sqrt{\msum_i \eta_i^2 \Omega_g^D(x_{i,g})^2}.
\]
Hence, $(\beta_t)_g = 0$ if $\Omega_g^D(Z) < 1- r_t\sqrt{\sum_i \eta_i^2 \Omega_g^D(x_{i,g})^2} \eqdef 1-r_g$.
\subsection{Gap-safe screening for Prox-SGD}

 Screening rules are algorithm agnostic. That is to say, given an algorithm with iterates $\beta_t$, one can always compute a safe region $\calZ$ for screening. 
As a result, we can incorporate screening to proximal stochastic gradient descent when the problem to solve has a finite sum empirical loss of the form \eqref{eq:finitesum}. This is the most straightforward way of carrying out screening rule, and indeed, a similar screening strategy was recently proposed for ordered weighted $\ell_1$-norm regularized regression in~\cite{bao2020fast}.

For the finite sum problem, consider an algorithm of the following general form: for each $t$, sample $(x_t, y_t)$ uniformly at random from the finite data
\begin{equation}\label{eq:sgd_finite}
\begin{aligned}
\beta_{t+1} = \Tt(\beta_t, \theta_t x_t, \gamma_t) \quad \textrm{with} \quad \theta_t = f_{y_{t}}'(x_{t}^\top \beta_t) . 
\end{aligned}
\end{equation}
We have for SGD, $\Tt(\beta_t,\phi_t,\gamma_t) = \beta_t - \gamma_t( \phi_t + \lambda Z_t)$ with $Z_t\in \partial \Omega(\beta_t)$, while for Prox-SGD $\Tt(\beta_t,\phi_t,\gamma_t) = \prox_{\gamma_t \lambda \Omega}\pa{\beta_t - \gamma_t \phi_t}$. 
Algorithm \ref{algo:full-screen}  combines~\eqref{eq:sgd_finite} with safe screening rules.

\begin{remark}
Algorithm \ref{algo:full-screen} has two loops of iterations: the inner loop is the standard stochastic gradient update, while for the outer loop is screening with certain safe rules. Note that the outer loop makes use of $\bar \theta_t$ which is evaluated over the entire dataset.
Such a setting is reminiscent of the SVRG algorithm~\cite{johnson2013accelerating}, where the full gradient of the loss function at an anchor point needs to be computed. 
Likewise, the choice of steps for inner loop, the value of $T$ in Algorithm \ref{algo:full-screen} should be of the order of $m$, to balance the overhead of computing $\theta_t$.
\end{remark}

\begin{remark}
For Algorithm \ref{algo:full-screen}, all the aforementioned safe screening rules can be applied; see~\cite{ghaoui2010safe,liu2014safe,ndiaye2015gap,ndiaye2017gap} and the references therein. 
However, for online learning, this is no longer true, since for online learning it is expensive or even impossible to obtain the projected point $\bar\theta_t$, let alone construct the safe region $\calZ$ using \eqref{eq:gap-certificate}. Hence, in what follows, we propose an approach to compute an online gap and construct a safe region without the need to project onto the constraint set.
\end{remark}

\section{Screening for online algorithms}\label{sec:online}

For large-scale problems online optimization methods, it is unrealistic or impossible to compute the dual variable $\bar \theta_t$. Consequently, one cannot construct the safe region $\calZ$ for screening. 
However, for the gap-safe screening rule, since its safe region is built on function duality gap, it is possible to generalize the rule to the online setting via stochastic approximations. The purpose of this section is to build such a generalization. The roadmap of this section is described below: 
\begin{enumerate}
\item We first describe how to construct online dual certificates and primal/dual objectives, which consist of the following aspects: a) the dual problem of the online problem \eqref{eq:onlineprob}; b) online duality gap via stochastic approximations; c) online dual certificate; d) convergence guarantees. These are provided in Section \ref{sec:step1}. 

\item {With the online duality gap and dual certificate obtained in the first stage Section \ref{sec:step1}, we then can extend the gap-safe screening rule to the online setting. This extension is described in Section \ref{sec:onlinescreen}.} %

\item Finally in Section \ref{sec:procedure}, in Algorithm \ref{algo:screen} we summarize our online-screening scheme for proximal stochastic gradient descent. 
\end{enumerate}

\subsection{Online dual certificates and objectives}\label{sec:step1}

Given an online method, at each step we sample $(x_t, y_t)$ from the distribution $\Lambda$ and evaluate
\begin{equation}\label{eq:theta_t}
\theta_t \eqdef f_{y_t}'(x_{t}^\top \beta_{t}) . 
\end{equation}
In what follows, we define an online dual point $\bar Z_t$ that is constructed as weighted average of the past evaluated points $\ens{\theta_{s}}_{s\leq t}$ and define online primal and dual objectives that are again weighted averages of the past selected functions $\ens{f_{y_{s}}}_{s\leq t}$ and $\ens{f^*_{y_{s}}}_{s\leq t}$, where $f_y^*$ denotes the convex conjugate of loss function $f_y$.

\subsubsection{Online dual problem and duality gap}
The dual problem of the primal problem \eqref{eq:onlineprob} takes the following form
\begin{equation}\label{eq:onlinedual}\tag{$D_{\lambda}$}
\max_{v}\enscond{ \Dd(v) \eqdef - \EE_{(x,y)}\big[f_y^*\Pa{ v(x,y)} \big]}{ \Omega^D\Pa{\EE_{(x,y)}[v(x,y) x ]} \leq \lambda}
\end{equation}
where we maximize over $\Lambda$-measurable functions $v$. \rev{The derivation of \eqref{eq:onlinedual} can be found in Appendix \ref{sec:onlinedual}.}
Note that it admits a unique maximizer, since $f_y^*$ is $\frac1L$-strongly convex  due to the fact that $\nabla f$ is $L$-Lipschitz. 
The problems \eqref{eq:onlineprob} and \eqref{eq:onlinedual} are referred as primal and dual problems and their solutions are related:
any minimizer $\beta^\star$ of \eqref{eq:onlineprob} is related to the optimal solution $v^\star$ of \eqref{eq:onlinedual} by $v^\star(x,y) = f'(x^\top \beta^\star, y) $ and
\begin{equation}\label{eq:certstar}
 \cert^\star \eqdef - \sfrac{1}{\lambda} \EE_{(x,y)} [v^\star(x,y) x] \in \partial \Omega(\beta^\star).
\end{equation}

\renewcommand{\NN}{\mathbb{N}}

Observe that the primal and dual objective functions are expectations. We now discuss their online ergodic estimations over the sampled data. 
At time step $t\in\NN$, given a primal variable $\beta\in\RR^n$, define the online primal objective
\begin{equation}
\bar P^{(t)}(\beta) \eqdef \bar F^{(t)}(\beta) +\lambda \Omega(\beta)
\end{equation} 
where $\mu_t\in (0,1)$, $\bar F^{(1)}(\beta) = f_{y_1}(x_{1}^\top \beta)$ and
\[
\bar F^{(t)}(\beta) \eqdef \mu_t f_{y_{t}}(x_{t}^\top \beta) + (1-\mu_t) \bar F^{(t-1)}(\beta) .
\]
For each step $s \leq t$, $\theta_s = f_{y_s}'(x_{s}^\top \beta) \in \RR$ denotes the dual variable of that step. Let $\theta = (\theta_1, ..., \theta_s, ..., \theta_t) \in \RR^t$, the online dual objective for $\theta$ reads: let $\bar D^{(1)}(\theta_1) = - f_{y_{1}}^*( \theta_1)$
\begin{equation}\label{eq:barDt}
\bar D^{(t)}(\theta) \eqdef -\mu_t f_{y_{t}}^*( \theta_t) + (1-\mu_t) \bar D^{(t-1)} \Pa{ (\theta_s)_{s\leq t-1} } . 
\end{equation}

\begin{remark}
Note that the primal variable has a fixed dimension of $n$, while the dimension of the dual variable $\theta$ grows with iteration $t$. 
\end{remark}

We make the following standard assumption \citep{robbins1951stochastic} on $\mu_t$:
\begin{equation}\label{eq:mu_assump}
\msum_t \mu_t = +\infty
\qandq 
\msum_t \mu_t^2 < + \infty . 
\end{equation}
 Typical choices are $\mu_t = t^{-u}$ for $u\in (0.5,1]$.

It is straightforward to check (Lemma \ref{lem:mairal} (i)) that there exists a decreasing sequence $\eta_s^{(t)}>0$ such that
\begin{equation}\label{eq:eta_mu}
 \msum_{s\leq t} {\eta_s^{(t)}} = 1 \qandq \eta_s^{(t)} \eqdef \mu_s \mprod_{i=s+1}^{t} (1-\mu_{i}) .
 \end{equation}
Using $(\eta_s^t)_{s\leq t}$, in our previous notation of empirical \eqref{eq:finitesum} and \eqref{eq:finitesumdual}, we have $\bar P^{(t)} = P^{\eta^{(t)}}$ and $\bar D^{(t)} = D^{\eta^{(t)}}$, and they are related by
\[
\min_{\beta\in \RR^n}\bar P^{(t)}(\beta ) = \max_{\theta\in \Kk_{\lambda,\eta^{(t)}}} \bar D^{(t)}(\theta) . 
\]

\begin{definition}[Online duality gap]
Let $\theta_s$ be as in \eqref{eq:theta_t} and $\beta\in\RR^n$, define the online duality gap as
\begin{equation}\label{eq:gap}
\gap_t(\beta) \eqdef \bar P^{(t)}(\beta) - \bar D^{(t)}((\theta_s)_{s\leq t}).
\end{equation}
\end{definition}

{
\begin{remark}
Since $\theta_s$ is not necessarily a dual feasible point, the online duality gap $\gap_t(\beta)$ is not guaranteed to be non-negative. 
On the other hand, while $\gap_t(\beta)$ can be computed in an online fashion, the feasible point $\bar \theta_s$, in $\bar P^{(t)}(\beta) - \bar D^{(t)}((\bar \theta_s)_{s\leq t})$, which is the projection of $\theta_s$ onto the constraint set $\Kk_{\lambda,\eta^{(t)}}$ cannot be computed online.
\end{remark}
}

\subsubsection{An online estimate of the dual certificate}

With the online duality gap, we now construct a dual certificate from $\beta_t$ and $\theta_t$. 
Since the primal variable $\beta_t$ converges to $\beta^\star$, it is natural to define a candidate point, for $\mu_t>0$, as
\begin{equation}
\label{eq:Zstar}
\bcert \eqdef -\sfrac{1}{\lambda} \mu_t \theta_t x_{t} + (1-\mu_t) \bar{\cert}_{t-1}, \qandq \bar\cert_1 \eqdef -\sfrac{1}{\lambda} \theta_1 x_{1}.
\end{equation}
In the notation introduced in \eqref{eq:eta_mu}, 
we can write
\begin{equation}\label{eq:onlinecert}
\bcert 
= -\sfrac{1}{\lambda} \msum_{s=1}^t \eta_s^{(t)} \theta_s x_{s} 
= -\sfrac{1}{\lambda} \msum_{s=1}^t \eta_s^{(t)} f_{y_s}'(x_{s}^\top \beta_{s}) x_{s} .
\end{equation}

\subsubsection{Convergence results}

Before presenting our online-screening rule, we provide some theoretical convergence analysis of the above online estimates. 
\rev{To this end, we need the following assumptions
\begin{enumerate}[label={\rm{*}}]
\item[{\rm (i)}] let $\Oo, \Yy$ be compact sets, and assume there exists $G>0$ such that: \[\abs{f_y'(x^\top\beta)} \leq G, \quad \forall~ (x,y) \in\Xx\times \Yy \qandq \beta\in\Oo . \] 

\item[{\rm (ii)}] assume that $\beta_t$ converges to a minimizer $\beta^\star\in \Argmin_\beta P_{\lambda}(\beta)$ with $\beta^\star \in\Oo$.
\end{enumerate}
Define
 \[
 \epsilon_t\eqdef n
\sqrt{ \msum_{j=1}^t (\eta_j^{(t)} )^2}\lesssim n \sqrt{ \min_{m<t} { \msum_{j=m}^t \mu_j^2 + \exp(-2\msum_{j=m}^t \mu_j)} }.
 \]
Under assumption \eqref{eq:mu_assump}, one can show (see Lemma \ref{lem:mairal}) that $\epsilon_t \to 0$ as $t\to\infty$ and
 if $\mu_j = 1/j^\gamma$ with $\gamma\in (0.5,1]$, then $\epsilon_t = \Oo(t^{-\gamma+\frac12})$.}

We first establish uniform convergence of the online objective $\bar P^{(t)}$ to its expectation $\Pp$, and uniform convergence of the corresponding dual certificate $Z^{\star,(t)}$ to $Z^\star$.

\begin{proposition}
\label{prop:obj_conv}

The following result holds

\begin{enumerate}[label={\rm{*}}]
\item [{\rm (i)}] Let $\Oo$ be a compact set of $\RR^n$, then almost surely 
$
\lim_{t\to+\infty}\sup_{\beta \in\Oo} \abs{ \bar P^{(t)}(\beta) - \Pp(\beta) } = 0 $. 

\item[{\rm (ii)}] Let $Z^{\star,(t)}$ be the dual certificate associated to $\bar P^{(t)}$, we have uniform convergence to $Z^\star$ defined in \eqref{eq:certstar}: 
\[
\lim_{t\to+\infty} \norm{Z^{\star,(t)}- Z^\star}_\infty = 0 
\rev{\qandq \EE[\norm{Z^{\star,(t)}- Z^\star }_\infty ] = \Oo\pa{ \sqrt{\epsilon_t}} , }
\] 
where the implicit constant in the Big-$\Oo$ depends on $n$, the dimension of $\beta$ (it comes from the equivalence between norms in finite dimensions, $\Omega^D$ and  $\norm{\cdot}_\infty$).
\end{enumerate}

\end{proposition}

We also have convergence of the online certificate $\bcert$ of \eqref{eq:onlinecert} to $\cert^\star$, and the online gap evaluated at converging points also converges to zero. 

\begin{proposition}[Convergence of the online estimate] \label{prop:dualconv}
Let $\beta^\star$ be a minimizer of \eqref{eq:onlineprob}, 
assume $\beta_t \to \beta^\star$ almost surely and \rev{$\EE[\norm{\beta_t - \beta^\star}] \to 0$}.
 Then,   $\lim_{t\to\infty}\msum_s \eta_s^{(t)} \EE\norm{\beta_s - \beta^\star} = 0$ and the following hold:
\begin{itemize}
\item[{\rm (i)}] $\bcert$ converge to $\cert^\star$ as $t\to+\infty$ almost surely and \begin{align*}
\EE\norm{ \bcert- Z^\star }_\infty & \lesssim \epsilon_t + \msum_s \eta_s^{(t)} \EE\norm{\beta_s - \beta^\star} .
\end{align*}
\item[{\rm (ii)}] If $\bar \beta_t \to \beta^\star$, then $\gap_t(\bar \beta_t) \to 0$ as $t\to+\infty$ almost surely and
 \begin{align*}
\EE\abs{\gap_t(\bar \beta_t)} & \lesssim \epsilon_t + \EE\norm{\bar \beta_t - \beta^\star} + \msum_s \eta_s^{(t)} \EE\norm{\beta_s - \beta^\star} . 
\end{align*} 
\end{itemize}

\end{proposition}
The proofs can be found in the Appendix \ref{proof:section3}.

\subsection{Online-screening}\label{sec:onlinescreen}

In this section, we derive a screening rule for solutions to the online objective $\bar P^{(t)}$ based on the certificate $\bcert$ of \eqref{eq:onlinecert} and $\gap_t(\beta)$ of \eqref{eq:gap}. 
In the following, let $\bcert$ be as in \eqref{eq:onlinecert}, $\theta_t$ be as in \eqref{eq:theta_t} and let $Z^{\star,(t)}=-\frac{1}{\lambda} \sum_{s=1}^t \eta^{(t)}_s \theta^{\star, (t)}_s x_{s}$ where $\theta^{\star, (t)}$ is the maximizer of $\bar D^{(t)}$~\eqref{eq:barDt}.

\begin{lemma}[Screen gap] \label{lem:radius_gap}
Let $\bar\beta\in\RR^n$ \finalrev{and $\hat \beta\in \RR^n$}, then there holds
\[
\sfrac{1}{2L} \msum_{s=1}^t \eta^{(t)}_s \abs{ \theta_{s} - \theta^{\star, (t)}_s}^2 
\leq \gap_t(\bar\beta) + \finalrev{\bar P^{(t)}(\hat \beta)} \; \Pa{ \Omega^D(\bcert)-1 }_+ 
\]
Moreover, for all $g\in\Gg$,
\begin{align*}
\Omega_g^D\pa{\bcert - Z^{\star,(t)} } \leq r_g^{(t)}(\bar\beta, \bcert) 
\end{align*}
where,  $N_g\eqdef { \sum_{s=1}^t \eta^{(t)}_s \Omega_g^D( x_{s})^2 }$ and
\[
r_g^{(t)}(\bar\beta, \bcert) \eqdef \sfrac{\sqrt{2L N_g }}{\lambda} \; \sqrt{ \gap_t(\bar\beta) + \finalrev{\bar P^{(t)}(\hat \beta)}\; \pa{\Omega^D(\bcert) -1}_+} . 
\]

\end{lemma}
\finalrev{
\begin{remark}
The above holds for any $\bar \beta$ and $\hat \beta$. In our numerics, we choose $\hat \beta = \bar \beta$  where $\bar \beta$ is an ``anchor point" which is updated periodically (see Algorithm \ref{algo:screen}).
\end{remark}
}

\begin{remark}
Compared with the gap-safe rules of~\cite{ndiaye2015gap,ndiaye2016gap,ndiaye2017gap}, we do not project $\bcert$ onto the dual feasible set $\Kk_{\lambda, \eta^{(t)}}$, hence having an additional term $\pa{\Omega^D(\bcert) -1}_+$ in $r_g^{(t)}(\beta,\bcert)$.

\end{remark}

By combining Lemma \ref{lem:radius_gap} with Proposition \ref{prop:rule}, we obtain the following proposed screening rule for online optimization algorithm.

\begin{corollary}[Online screen rule]\label{cor:rule}
Let $\beta^{\star,(t)}\in \Argmin_\beta \bar P^{(t)}(\beta)$. Then,
given any $\bar \beta\in\RR^n$, $\beta^{\star,(t)}_g = 0$ if
\[
1- \Omega_g^D(\bcert) > r_g^{(t)}(\bar \beta, \bcert) .
\]
\end{corollary}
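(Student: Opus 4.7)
The plan is to derive Corollary \ref{cor:rule} as a direct consequence of combining Lemma \ref{lem:radius_gap} with Proposition \ref{prop:rule}, essentially by interpreting $\bar P^{(t)}$ as a finite-sum problem of the form \eqref{eq:finitesum} with weights $\eta_s^{(t)}$, so that Proposition \ref{prop:rule} applies verbatim to its minimizer.

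First, I would identify the correct dual certificate for the online primal problem. Since $\bar P^{(t)} = P^{\eta^{(t)}}$ (as noted after \eqref{eq:eta_mu}) is exactly the regularized empirical loss \eqref{eq:finitesum} with weights $\eta_s^{(t)}$ and samples $(x_s, y_s)_{s \leq t}$, the optimality condition \eqref{eq:Zstarm} applied at $\beta^{(t),*}$ gives
\[
\bar Z^{*} \;=\; -\tfrac{1}{\lambda} \msum_{s=1}^t \eta_s^{(t)}\, \theta^{(t),*}_s x_s \;\in\; \partial \Omega(\beta^{(t),*}),
\]
where $\theta^{(t),*}_s = f'_{y_s}(x_s^\top \beta^{(t),*})$ is the maximizer of $\bar D^{(t)}$ (i.e.\ the finite-sum dual as in \eqref{eq:finitesumdual}). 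Thus $\bar Z^*$ plays the role of $Z^\star$ in Proposition \ref{prop:rule} for the online primal.

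Second, I would invoke the second conclusion of Lemma \ref{lem:radius_gap}, which states exactly that
\[
\Omega_g^D\bigl( \bar Z^* - \bar Z_t \bigr) \;\leq\; r_g^{(t)}(\bar\beta, \bar Z_t) \qquad \forall\, g \in \Gg.
\]
This says that $\bar Z^*$ lies in the safe region $\Zz = \{Z : \Omega_g^D(Z - \bar Z_t) \leq r_g^{(t)}(\bar\beta, \bar Z_t),\ g \in \Gg\}$ centered at $c := \bar Z_t$ with radii $r_g := r_g^{(t)}(\bar \beta, \bar Z_t)$. At this point all hypotheses of Proposition \ref{prop:rule} are met for the finite-sum problem $\bar P^{(t)}$.

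Finally, I would apply Proposition \ref{prop:rule} to conclude $\beta^{(t),*}_g = 0$ whenever $1 - \Omega_g^D(c) > r_g$, which after substituting $c = \bar Z_t$ and $r_g = r_g^{(t)}(\bar\beta, \bar Z_t)$ yields exactly the screening criterion
\[
1 - \Omega_g^D(\bar Z_t) \;>\; r_g^{(t)}(\bar\beta, \bar Z_t)  \quad\Longrightarrow\quad \beta^{(t),*}_g = 0.
\]
There is essentially no obstacle here once Lemma \ref{lem:radius_gap} is in hand; the only subtlety worth flagging is the logical chain that $\bar P^{(t)}$ fits the template \eqref{eq:finitesum} with $\eta = \eta^{(t)}$, so that Proposition \ref{prop:rule} can be used off the shelf with $\bar Z^*$ in the role of $Z^\star$. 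The real work has already been carried out in Lemma \ref{lem:radius_gap}, whose proof is where the strong concavity of $\bar D^{(t)}$ (via the $1/L$-strong convexity of each $f_y^*$) and the gap-based radius appear.
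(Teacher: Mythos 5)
Your proposal is correct and follows exactly the route the paper intends: the paper gives no separate proof of Corollary~\ref{cor:rule} beyond the sentence that it follows ``by combining Lemma~\ref{lem:radius_gap} with Proposition~\ref{prop:rule},'' and your argument fills in precisely those details (identifying $\bcert^*$ as the dual certificate of the weighted finite-sum problem $\bar P^{(t)} = P^{\eta^{(t)}}$, using the second conclusion of Lemma~\ref{lem:radius_gap} to place $\bcert^*$ in the safe region centered at $\bcert$, and invoking Proposition~\ref{prop:rule}). No gaps.
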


\begin{remark}
The above screening is safe for the online problem $\bar P^{(t)}(\beta)$ in the sense that it will not falsely remove features which are in the solution $\beta^{\star,(t)}$ of $\bar P^{(t)}(\beta)$. However, the support of $\beta^{\star,(t)}$ may not necessarily coincide with that of the global minimizer $\beta^\star$ of \eqref{eq:onlineprob}. 
Hence, our rule is not necessarily safe for the objective expectation \eqref{eq:onlineprob}. Further discussions on the safety of our rule can be found in Sections \ref{sec:section_checks} and \ref{sec:safety_check}. 
\end{remark}

\paragraph{A sequential screening strategy}

We can directly apply Corollary \ref{cor:rule} 
to screen out variables while running SGD. However, the effectiveness of this rule will depend on the proximity of $\bar \beta$ to the optimal point $\beta^\star$. We therefore propose to progressively update this \emph{anchor point} $\bar \beta$.


Let $0=t_0 < t_1 < t_2 <\cdots < t_k = T$ and denote $[t_{j-1}, t_j]\eqdef \ens{t_{j-1}+1,\ldots, t_j}$. Let $\eta^{(T)}_{s}\in (0,1)$ for $s\in [0,T]$ be such that $\sum_{s\in[0,T]}\eta^{(T)}_{s}=1$.
 Given $j\in \ens{1,\ldots, k}$, let $\pa{\theta_s^\star}_{s \in [t_{j-1}, t_j]}$ be the optimal dual solution to
{
\begin{equation}\label{eq:d1}
\begin{aligned}
& \max_\theta \msum_{s\in [t_{j-1}, t_j]} -\eta_s^{(T)} f_{y_s}^*(\theta_s) \\
\textrm{such~~that} \quad & \Omega^D \bPa{\msum_{s \in [t_{j-1}, t_j]} \eta_s^{(T)} x_s \zeta_s} 
\leq \lambda \msum_{s \in [t_{j-1}, t_j]} \eta_s^{(T)} . 
\end{aligned}
\end{equation}
}
Note that \eqref{eq:d1} 
is dual to the primal problem 
\[
\min_\beta \msum_{s \in [ t_{j-1},t_j]} \eta_s^{(T)} \Pa{ f_{y_s}(x_s^\top \beta) + \lambda\Omega(\beta)}.
 \]
 The corresponding dual certificate of \eqref{eq:d1} is
\[
Z_j^\star 
\eqdef - \sfrac{1}{\lambda} \msum_{s \in [ t_{j-1},t_j]} \sfrac{\eta_s^{(t)} }{\gamma_j}\theta_s^\star, 
\qwhereq \gamma_j = \msum_{s \in [ t_{j-1},t_j]} \eta_s^{(T)}. 
\]
For each $j$, applying Lemma \ref{lem:radius_gap} with $\ens{\theta_s}_{s\in [t_{j-1},t_j]}$, $\hat \beta =\bar \beta = \beta_{t_{j-1}}$ and $ \ens{\theta_s^\star}_{s\in [t_{j-1},t_j]}$, we get
\finalrev{
\begin{equation}\label{eq:gapj}
\begin{split}
&\msum_{s\in [t_{j-1},t_j]} \eta_s^{(T)} \abs{\theta_s - \theta_s^\star}^2 \\
&\leq \msum_{s\in [t_{j-1},t_j]} \eta_s^{(T)} \big( (f_{y_s}(x_s^\top \beta_{t_{j-1}}) + \lambda \Omega(\beta_{t_{j-1}})  ) (1+ \Pa{\Omega^D (Y_j) - 1}_+)  - f_{y_s}^*(\theta_s)  \big)
\end{split}
\end{equation}
}
where
$
Y_j \eqdef \frac{1}{\sum_{s\in [t_{j-1}, t_j]} \eta_s^{(T)} } \sum_{s\in [t_{j-1}, t_j]} \eta_s^{(T)} \theta_s x_s .
$
Summing \eqref{eq:gapj} over $j=1,\ldots, k$ and denoting $\bar \beta_s \eqdef \beta_{t_{j-1}}$ for $s\in [t_{j-1}, t_j]$, we obtain
\begin{equation}\label{eq:sc1}
\msum_{s=1}^T \eta_s^{(T)} \abs{\theta_s - \theta_s^\star}^2 \leq R_T
\end{equation}
where 
\finalrev{
$$
R_T \eqdef \sum_{j=1}^k \msum_{s\in [t_{j-1},t_j]} \eta_s^{(T)} \Ppa{ f_{y_s}(x_s^\top \bar \beta_s) + \lambda \Omega(\bar \beta_s)} (1+\pa{\Omega^D (Y_j) - 1}_+) -  \sum_{s=1}^T \eta_s^{(T)} f_{y_s}^*(\theta_s)
$$
}
Lastly define
\[
\bar Z^\star 
= \msum_{j=1}^k \gamma_j Z_j^\star 
=- \sfrac{1}{\lambda} \msum_j \msum_{s \in [ t_{j-1},t_j]} \eta_s^{(T)} \f_{y_s}'(x_s^\top \bar \beta_j^\star) x_s , 
\]
we have $\bar Z_T \eqdef \sum_{s=1}^T \eta_s^{(T)} \theta_s x_s$ satisfying
\begin{equation}\label{eq:sc2}
\Omega_g^D(\bar Z_T - \bar Z^\star) 
\leq \sfrac{\sqrt{2L R_T N_{T,g} }}{\lambda} ,
\end{equation}
where
$N_{T,g} \eqdef { \sum_{s=1}^T \eta^{(T)}_s \Omega_g^D( x_{s})^2 }$. Note that the residual term $R_T$ now depends on the sequence $\beta_{t_j}$ which converges to $\beta^\star$ as $j\to+\infty$. We can therefore expect the RHS of \eqref{eq:sc2} to converge to 0 as $T\to+\infty$. Having established how to progressively update the anchor point, we are finally able to present our online-screening rule for online optimization algorithms in the next section.


\subsection{Screening procedure}\label{sec:procedure}

Consider an online algorithm of the following form: for $t=0,1,2,\ldots$, draw sample $(x_t,y_t) \sim \Lambda$, and compute
\begin{equation}\label{eq:sgd_}
\begin{split}
\beta_{t+1} = \Tt(\beta_t, \theta_t x_t, \gamma_t) \quad\textrm{with}\quad \theta_t = f_{y_t}'(x_t^\top \beta_t) ,
\end{split}
\end{equation}
where $\Tt$ is the algorithm operator, and again $\Tt(\beta_t,\phi_t,\gamma_t) = \beta_t - \gamma_t( \phi_t + \lambda Z_t)$ in the case of SGD, and $\Tt(\beta_t,\phi_t,\gamma_t) = \prox_{\gamma_t \tau\Omega}\pa{\beta_t - \gamma_t \phi_t}$ for Prox-SGD.
We state our screening framework for online optimization methods in Algorithm \ref{algo:screen}. 

\finalrev{
\begin{algorithm}[!htp]
\SetAlgoLined
Given: step-size $\ens{\gamma_t}_{t\in\bbN}$, exponent $w$, 		 		$\mu_t \eqdef 1/t^w$, initial point $\bar\beta\in\RR^n$\;
 initialization $t=1$; $p_0 =d_0=N_{0,g} = 0$\;
 $u_0 = 1$;
 $S=0$; $Z=0_n$\;
 \While{not terminate}{
 $\beta_{0} = \bar \beta$ \tcp*{\tcb{set the anchor point}}
 $p_0 = 0$; $X_0 = 0$\;

	\For{$t=1,\ldots, T$}{
 		$(x_t,y_t)\sim \Lambda$ \tcp*{\tcb{random sampling}} 
 		$\theta_t = f_{y_t}'( x_t^\top \beta_{t-1})$; $\beta_{t} = \Tt(\beta_{t-1}, \theta_t x_t, \gamma_t)$ \tcp*{\tcb{standard gradient update}}

 		$ X_t = -\frac{1}{\lambda} \mu_t \theta_t x_t + (1-\mu_t) X_{t-1}$ \tcp*{\tcb{certificate update}}
 		$p_t = \mu_t\pa{f_{y_t}(x_t^\top \bar \beta) + \lambda \Omega(\bar \beta)} + (1-\mu_t) p_{t-1}$ \tcp*{\tcb{primal value}}
 		$d_t = - \mu_t f_{y_t}^*(\theta_t) + (1-\mu_t) d_{t-1}$ \tcp*{\tcb{dual value}}
 		$\forall g\in\Gg, \; N_{t,g} = \mu_s \Omega^D_g(x_t)^2 + (1-\mu_s) N_{t-1,g}$ \;
 		{$u_t = (1-\mu_t) u_{t-1}$}\;
 }
 $\bar\beta = \beta_{t-1}$ \tcp*{\tcb{update anchor point}}
 $Z = u_t Z + X_t$ \tcp*{\tcb{estimated certificate}}
 $S = u_t S + p_t (1+ ( \Omega^D(X_t/(1-u_t) )- 1)_+)$\;
 
 $R = S - d_t$\;
	$\Ss = \enscond{g\in\Gg}{\Omega^D_g(Z) < 1- \tfrac{ \sqrt{2L N_{t,g} R}}{\lambda} }$ \tcp*{\tcb{screening set}}
	$(\beta_t)_{\Ss} = 0$ \tcp*{\tcb{pruning the primal point}}
	 $u_0 = 1$\;
 }
 \caption{Online optimization algorithm with screening \label{algo:screen}}
\end{algorithm}
}

Next we provide some discussions on how to compute some key values of the algorithm, for instance the terms described in \eqref{eq:sc1} and \eqref{eq:sc2}.
\begin{itemize}
\item 
It is straightforward to compute $\bar Z_T$ and $N_{T,g}$, as we have $\bar Z_0 = 0,N_{0,g} = 0$ and 
\begin{align*}
\bar Z_s \eqdef \mu_s \theta_s x_s + (1-\mu_s) \bar Z_{s-1} \qandq 
N_{s,g} &\eqdef \mu_s \Omega^D_g(x_s)^2 + (1-\mu_s) N_{s-1,g} .
\end{align*}

\item \finalrev{ While for $R_T$, it takes the following form 
\[
R_T \eqdef \underbrace{\msum_{j=1}^k p_j (1+ \Pa{\Omega^D (Y_j) - 1}_+)}_{S_T}
- \underbrace{\msum_{s=1}^T - \eta_s^{(T)} f_{y_s}^*(\theta_s)}_{d_T}  . 
\]
where $p_j\eqdef  \sum_{s\in [t_{j-1},t_j]} \eta_s^{(T)}   ( f_{y_s}(x_s^\top \bar \beta_s)  +\lambda \Omega(\bar \beta_s))$.
The second term is straightforward: define $\bar \beta_s = \beta_{t_{j-1}}$ for all $s\in [t_{j-1}, t_j]$ and repeat  over $s=1,\ldots, T$: $ d_0 = 0$ and 
\begin{align*}
d_s \eqdef - \mu_s f_{y_s}^*(\theta_s) + (1-\mu_s) d_{s-1}  .
\end{align*}

To compute $S_T$:
during the first $s=1,\ldots, t_1$ iterations, let $X_0^{(1)} = 0, p_0^{(1)} = 0$ and 
\begin{align*}
X_s^{(1)} \eqdef \mu_s \theta_s x_s + (1-\mu_s) X_{s-1}^{(1)} \qandq
p_s^{(1)} \eqdef \mu_s (f_{y_s}(x_s^\top \bar \beta_s) +\la \Omega(\bar \beta_s))+ (1-\mu_s) p_{s-1}^{(1)}  .
\end{align*}
and note that
$
 Y_1 \eqdef X_{t_1},~ p_1 \eqdef p_{t_1}^{(1)},~ S_{t_1} \eqdef p_1 (1+\Pa{ \Omega^D(Y_1)-1 }_+)
$. 
Then for iteration $s \in [t_{j-1} , t_{j}],~ j= 2,~ \dots, k$, we have: $X_{t_{j-1}}^{(j)} = 0, p_{t_{j-1}}^{(j)} = 0$ and 
\begin{align*}
X_s^{(j)} &\eqdef \mu_s \theta_s x_s + (1-\mu_s) X_{s-1}^{(j)} \qandq 
p_s^{(j)} &\eqdef \mu_s  (f_{y_s}(x_s^\top \bar \beta_s) +\la \Omega(\bar \beta_s)) + (1-\mu_s) p_{s-1}^{(j)}   .
\end{align*}
At iteration $t_j$: define $\gamma_j \eqdef \prod_{s\in[t_{j-1}, t_j]}(1-\mu_{s})$ and
\begin{equation*}
Y_j \eqdef \sfrac{1}{1-\gamma_j} X_{t_j}^{(j)}, \qquad p_j \eqdef p_{t_j}^{(j)}
\qandq
S_{t_j} \eqdef \gamma_j S_{t_{j-1}} +  p_j (1+(\Omega^D(Y_j)-1)_+).
\end{equation*}
Note that we in fact have $\bar Z_{t_j} = X_{t_j}^{(j)} + \gamma_j \bar Z_{t_{j-1}}$.
}

\end{itemize}
We conclude this section by few remarks.

\begin{remark}$~$\label{rmk:w}
\begin{itemize}
\item [{\rm (i)}]
{\bf Computational pains and gains.} 
Our screening rule adds several computational overheads to the original online optimization problem, and all of them are of $\Oo(n)$ complexity. 
Denote by $n_t$ the dimension of the problem at current iteration.
\begin{itemize}
\item For the {\em inner loop} of Algorithm \ref{algo:screen}, \texttt{line 10-12} computing the dual certificate and primal/dual function values are of $\Oo(n_t)$ complexity. 
\item For the {\em outer loop} of Algorithm \ref{algo:screen}, all computations are at most $\Oo(n_t)$. 
\end{itemize}
Overall, the computational overheads added by screening is $\Oo(n_t)$ per iteration where $n_t$ is the dimension of $\beta_t$ at iteration step $t$. 

{\hspace{12pt}}On the other hand, our screening rule can effectively remove useless features along iteration. 
Suppose the sparsity of $\beta^\star$ is $\kappa$ which is much smaller than $n$ and our screening rule manages to screen out all useless features, then eventually $n_t = \kappa$ for all $t$ large enough, which in turn means the computational overheads are negligible. 

\item[{\rm (ii)}] {\bf Effect of the exponent $w$.} 
For Algorithm \ref{algo:screen}, the weight parameter $\mu_t$, specified by the exponent $w$, determines how important the latest iterate is. As a result, $w$ is crucial to the screening behaviour of Algorithm \ref{algo:screen}. In general the value of $w$ lies in $]0.5, 1]$. As we shall see in the numerical experiments, the smaller the value of $w$, the more aggressive the screening rule which makes Algorithm \ref{algo:screen} unsafe. While for larger choice of $w$, the screening is much more passive, hence safer.

\item[{\rm (iii)}] {\bf Choices of $T$.} 
For Algorithm \ref{algo:screen}, the inner loop iteration number is controlled by $T$. 
Similar to Algorithm \ref{algo:full-screen}, in practice, choices like $\ell m$ with $\ell$ being small integers demonstrate good performance. 
\end{itemize}

\end{remark}

\subsection{Safety checks}\label{sec:section_checks}
Though our screening rule is adapted from gap-safe rule, which is guaranteed to be safe, \ie only removes useless features and keeps all the active ones, applying Algorithm \ref{algo:screen} {\it alone} is {\em not} guaranteed to be safe. 
This is due to the fact that the rule we derive is with respect to the online objective $\bar P^{(t)}$ which is not the original objective \eqref{eq:onlineprob}. 
As a result, potentially our screening rule can falsely remove useful features.
However, this can be avoided by incorporating safe guard step, for instance, we can combine Algorithm \ref{algo:screen} with the strong rules developed in~\cite{tibshirani2012strong} to avoid false removal.

In the online setting, it is impossible to check the optimality condition as in the strong rules paper of~\cite{tibshirani2012strong} to avoid false removal. However we can offer confidence intervals on the safety of the reconstructed solution: 
\begin{itemize}
\item [{\rm (i)}]
Given a computed solution $\beta$ and support $S$, we can check the optimality of $\beta$ by computing for $(x_s,y_s)\overset{iid}{\sim} \Lambda$, $s=1,\ldots, K$ for $K>0$ and
$
\hat Z \eqdef \sfrac{1}{K} \msum_{s=1}^K f_{y_s}'(x_s^\top \beta) x_s.
$
Note that if $Z\eqdef \EE[\hat Z]$ satisfies $\norm{Z}_\infty \leq 1$ and $Z_S = \sign(\beta_S)$, then $\beta$ is indeed an optimal solution. 

\item [{\rm (ii)}]
By Hoeffding's inequality,
$
\PP\pa{\norm{\hat Z - \EE[\hat Z]}_\infty \geq \epsilon} \leq 2n \exp\bPa{-\sfrac{2K\epsilon^2}{G}}
$
where $G \geq \norm{f_y'(x^\top\beta) x}_\infty$ for all $x,y\in\Xx\times\Yy$. So, $\norm{Z_{S^c}}_\infty <1$ with probability at least $1-K^{-\alpha}$, provided that $
\norm{\hat Z_{S^c}}_\infty 
< 1 - \alpha \sqrt{\sfrac{G}{2K}}\log(2nK).
$
\end{itemize}
In implementation, one can periodically compute $\hat Z$ to check the safety of the computed support with confidence estimates.  

\begin{remark}
It can also be noted that in both Algorithm \ref{algo:full-screen} and Algorithm \ref{algo:screen}, the screening will be carried out until the termination of the iteration which actually is not necessary. Therefore in practice, one can terminate the screening once the support of the iterates is small enough. Take online screening for example, one terminate the screening if the size of the support of $\beta_t$ drops below $n/100$. However, the safety check using $\hat Z$ should be continue until the termination of the algorithm to ensure safeness.
\end{remark}

\section{Numerical results }\label{sec:experiments}

In this part, we present experiments to demonstrate the performance of our proposed online screen algorithm\footnote{Matlab code for reproducing our experiments are available at \url{https://github.com/jliang993/sgd-screening}}. All the experiments are performed on a ThinkStation P620 with 32-core CPU, 256GB memory and Ubuntu 20.04 system.

\subsection{Online experiments}

We first consider an online problem of the following form,
\[
\min_{\beta \in \RR^n} \sfrac12 \EE_{(x,y)}[\norm{\dotp{x}{\beta} - y}_2^2] + \lambda \norm{\beta}_1
\]
where $x$ is drawn from the uniform distribution  on $[-1,1]^n$ with $n = 10^5$ and $y = \dotp{\beta^\star}{x}+\varepsilon$ for a sparse vector $\beta^\star \in \bbR^n$ with $9$ non-zero entries and $\varepsilon\sim\calN(0,1)$ is Gaussian noise with mean 0 and variance $1$. Both standard Prox-SGD and Prox-SGD with online screening (``OS-Prox-SGD'') are considered. 
For OS-Prox-SGD, the iteration contains two phases: 
\begin{itemize}
\item[(a)] For the first phase, which accounts for $50\%$ of total number of iteration,  screening is  not applied and it is simply the plain Prox-SGD. 
\item[(b)] For the second phase, online screen is applied. In this part, since screening can reduce dimension of $\beta_t$, we can consider two scenarios\footnote{Note that this experiment is synthetic, and we show both cases simply to illustrate what happens if dimension reduction can also be exploited for more efficient sampling}: either draw new samples $x$ in the smaller dimension or in the original dimension $n$. As a result, two different implementations are considered
\begin{itemize}
	\item[(1)] We do not reduce the sample complexity, \ie sample $x_i$ in the original space $\mathbb{R}^n$. Denoted as ``OS-Prox-SGD-1''.
	\item[(2)] Sample in the smaller dimension obtained by screening, i.e. we  draw $x_i$ from  $\mathbb{R}^{|\supp(\beta_t)|}$ hence reducing sample complexity. Denoted as ``OS-Prox-SGD-2''.
\end{itemize}
\end{itemize}

\begin{figure}[!ht]
	\centering
	\subfloat[Size of $\beta_t$ over time]{ \includegraphics[width=0.45\linewidth]{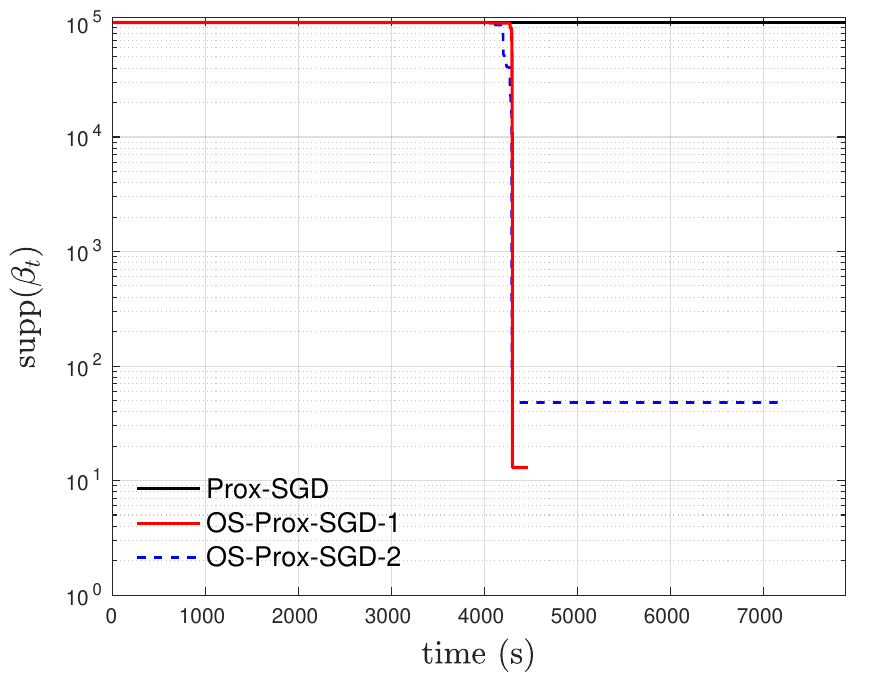} }		 
	\subfloat[Relative error $\norm{\beta_t-\beta_{t-1}}$]{ \includegraphics[width=0.45\linewidth]{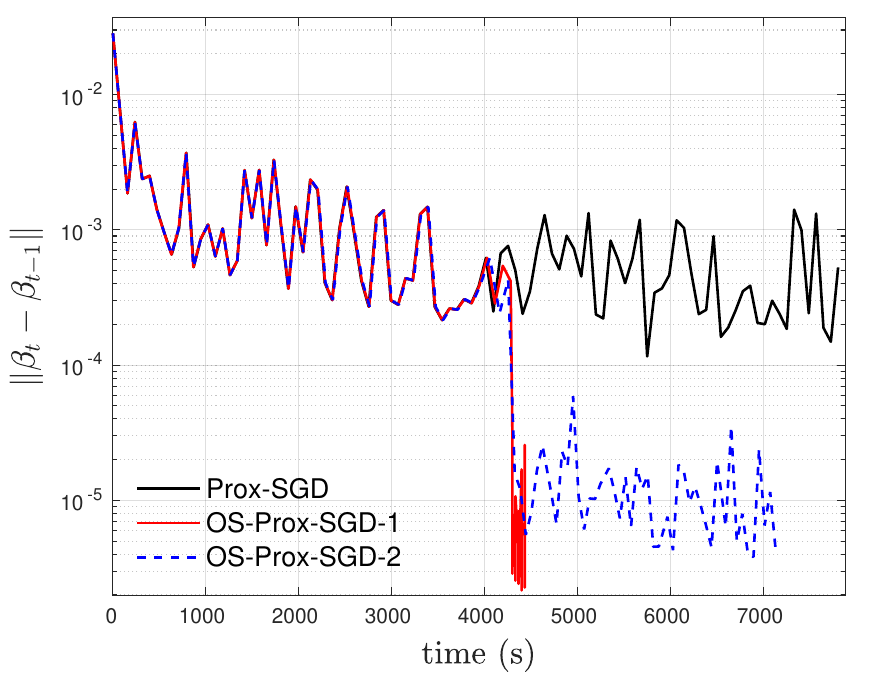} } 		\\[-2.5mm]
	\subfloat[Solutions comparisons]{ \includegraphics[width=0.45\linewidth]{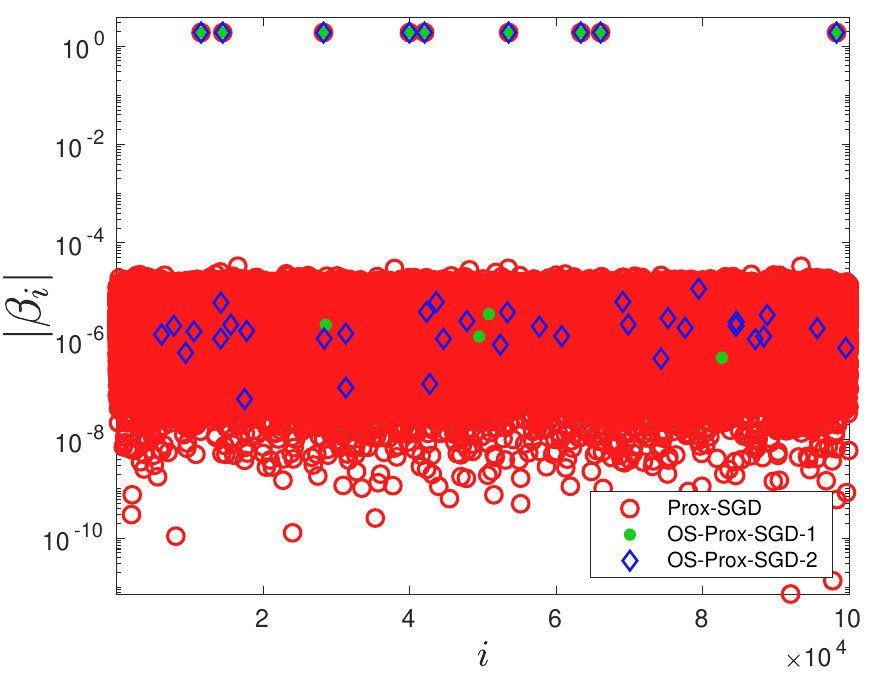} }		 
	\subfloat[Size of support vs CPU time]{ \includegraphics[width=0.45\linewidth]{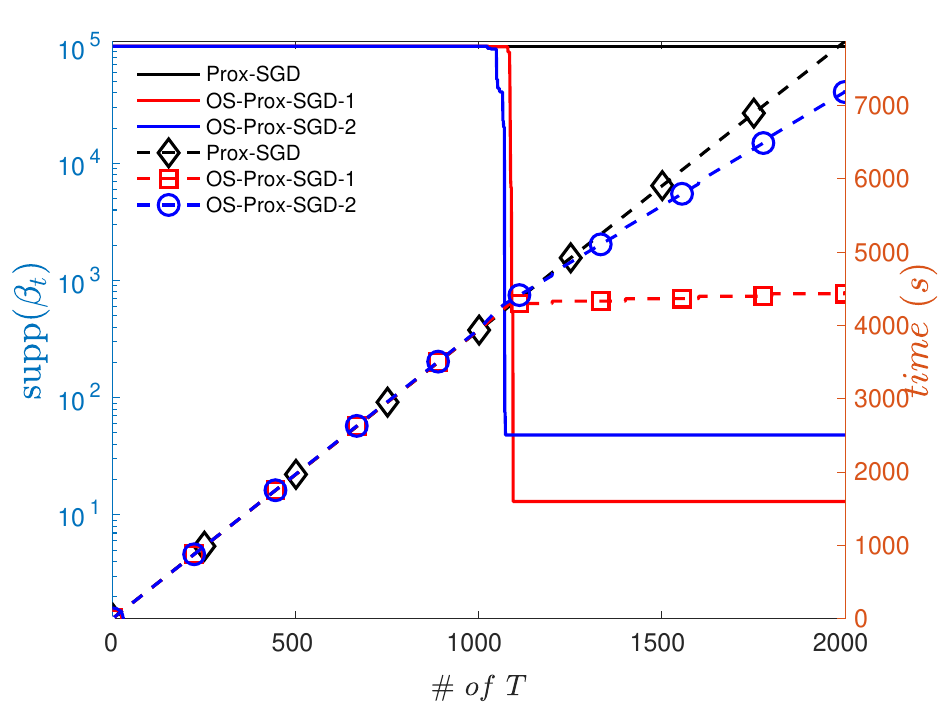} } 		\\[-3mm]
	\caption{Comparison between plain Prox-SGD and OS-Prox-SGD for a synthetic example. \rev{In (d), the dashed lines show time in seconds against iterations and the solid lines show the support against iterations.}  }
	\label{fig:online2}
\end{figure}

{\noindent}The maximum number of iteration in this test is set as $10^7$, and obtained observations are provided in Figure \ref{fig:online2}: 
\begin{itemize}
	\item Figure \ref{fig:online2}(a) shows the dimension of $(\beta_t)_{t}$ over time. As can be seen, our screening scheme manages to significantly reduce the dimension of the variable. {Note that for the presented example, OS-Prox-SGD-1 provides better dimension reduction that OS-Prox-SGD-2. In our implementations, we also observed cases where OS-Prox-SGD-2 provides better dimension reduction. This very likely is caused by the sampling step since the samples corresponding to the non-zero elements of $\beta_t$ are different.}

	\item In Figure \ref{fig:online2}(b), relative error also becomes smaller after screening starts, this is mainly because almost all very small elements (around scale $10^{-5}$) are screened out. 
	\item Figure \ref{fig:online2}(c) demonstrates the outputs of the two schemes, from which we observe that online screening effectively reduces the dimension of the problem. 
	
	\item Lastly in Figure \ref{fig:online2}(d), we provide a comparison between size of support of $\beta_t$ and wall clock CPU time. For the horizontal axis, we set $T = 5\times 10^3$. 
	We have the following wall clock CPU time for the three schemes. 
	\begin{table}[H]
	\begin{center}
		\begin{tabular}{c|c|c|c}
			\specialrule{.125em}{0ex}{0em} 
			Method & Prox-SGD & OS-Prox-SGD-1 & OS-Prox-SGD-2 \\ \specialrule{.125em}{0ex}{0em} 
			overall CPU time (s) & 8852 & 5044 & 7828 \\ \hline
			sampling time (s) & 5973 & 3273 & 5886 \\ \hline
			residual (s) & 2879 & 1771 & 1942 \\  \specialrule{.125em}{0ex}{0em} 
		\end{tabular}
	\end{center}
\end{table}
\vspace{-12mm}
It can be seen that, in terms of computational time, online screening  provides around 30\% or even more acceleration. While in terms of sample time, nearly 50\% the sampling time of Prox-SGD is saved. 

\end{itemize}
In terms of real-world data, we also consider the extended MNIST dataset, MNIST8m\footnote{\url{https://www.csie.ntu.edu.tw/~cjlin/libsvm/}}, which contains more than 8 million images of digits. Digits $4$ and $9$ are used for the  experiments, in total there are more than 1.5 million images of them. Iteration with only one pass through the data is made, hence, this can be treated as an online problem. 
In Figure \ref{fig:online3} we provide our numerical observation, which is very close to the observations in Figure \ref{fig:online2}, except in this experiment, we do not observe improvements in  running time. 
This is mainly caused by two factors: the small dimension of the problem and the images are sparse which makes the coefficients of Prox-SGD is sparse.  Nonetheless, it demonstrates the ability of screening to precisely identify relevant features.

\begin{figure}[!ht]
	\centering
	\subfloat[Solutions comparisons]{ \includegraphics[width=0.45\linewidth]{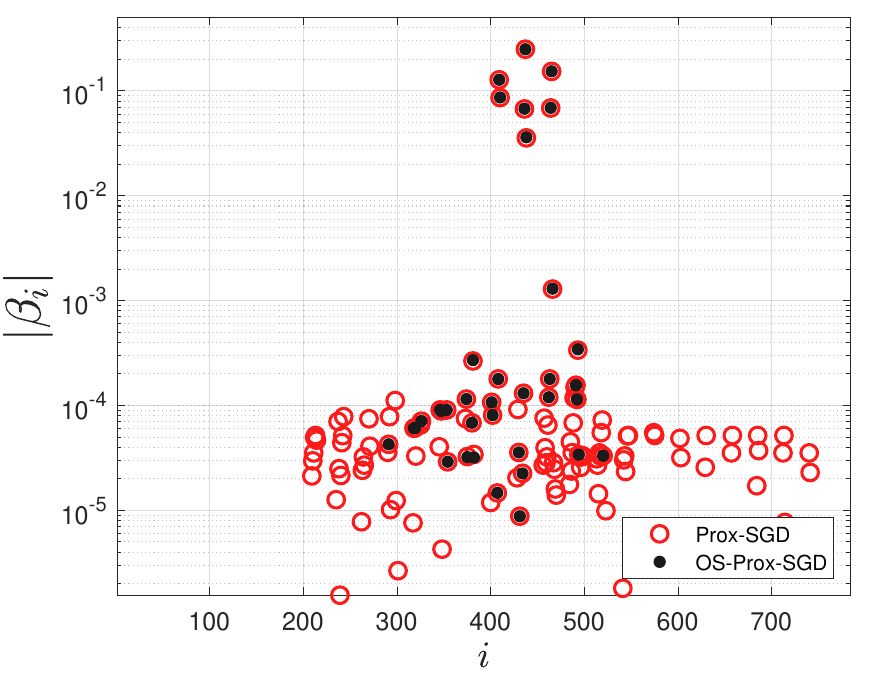} }		 
	\subfloat[Size of support vs CPU time]{ \includegraphics[width=0.45\linewidth]{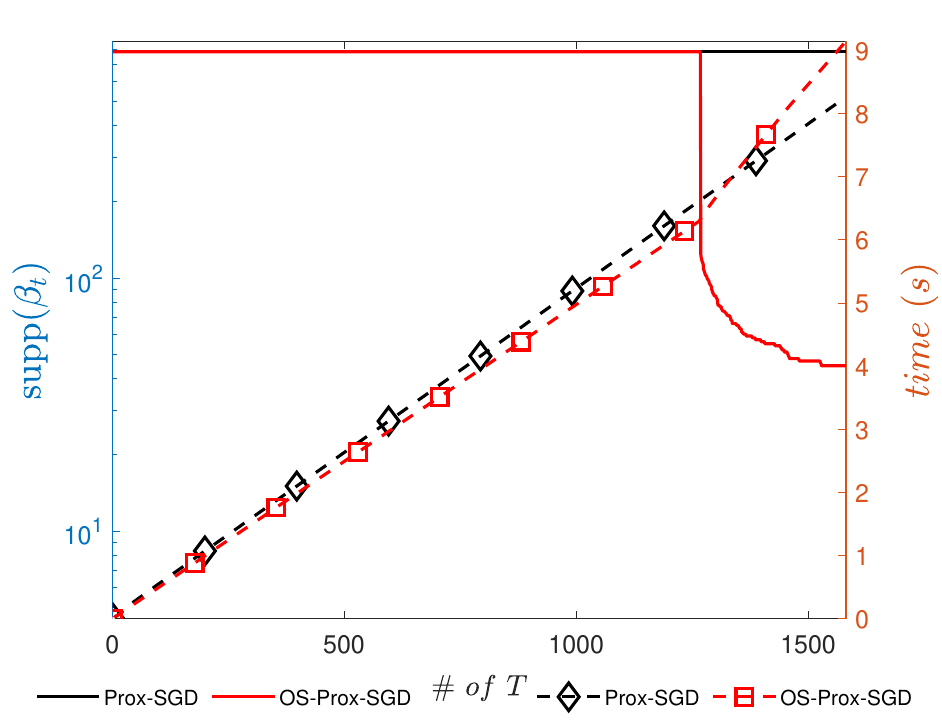} } 		\\ [-3mm]	
	\caption{Comparison between plain Prox-SGD and OS-Prox-SGD on MNIST8m dataset.} 
	\label{fig:online3}
\end{figure}

\subsection{More experiments on LIBSVM data}

In this part, we present experiments for the following $\ell_1$-regularized finite sum problem
\[
\min_{\beta \in \RR^n} F(\beta)+ \rev{\lambda} \norm{\beta}_1
\]
where $F(\beta) \eqdef \frac1m \sum_{i=1}^m f(x_i^\top \beta;y_i) $ with $f$ being either
\begin{itemize}
\item[(i)] the quadratic loss
$f(z;y) = \frac12(z-y)^2 $, a.k.a. the LASSO formulation,
\item[(ii)] the logistic loss $f(z;y) = \log\pa{1+\exp(-yz)}$ with $y\in \ens{-1,1}$, a.k.a. sparse logistic regression (SLR).
\end{itemize}
For both cases, we compare the performances of the standard proximal stochastic gradient descent (Prox-SGD), Algorithm \ref{algo:full-screen} (FS-Prox-SGD), Algorithm \ref{algo:screen} (OS-Prox-SGD). \rev{Both screening operations will be terminated if the size of the support $\beta_t$ drops below $20$.  For OS-Prox-SGD, the safety check is tested throughout the iterations. }
The details of settings of our experiments are as follows: 
\begin{itemize}
\item The SAGA algorithm~\cite{saga14} is used for computing the global minimizer of the problems. The exponent $w$ in Algorithm \ref{algo:screen} {\tt line 1} is set as $0.51$ for all tests.

\item 
Step-sizes of three algorithms are the same, which is $ \gamma_t = \frac{1}{mL t^{0.51}} $.

\item The maximum number of iterations for all schemes is set as the $\max(3\times 10^6, 300m)$. Both FS-Prox-SGD and OS-Prox-SGD  will be terminated if either maximum number of iteration is reached or the wall-clock CPU time exceeds that of Prox-SGD. 

\item Regularization parameter: we choose $\lambda < \lambda_{\max} \eqdef \norm{\nabla F(0)}_\infty$.  
In our experiments for the LASSO problem, we choose $\lambda = \frac{\lambda_{\textrm{max}}}{2}$. While for SLR problem, various choices are chosen and provided below.

\item For both screening schemes, we set $T = 4m$, \ie screening is applied every $4m$ steps. 

\item \rev{{\bf Safety checks} \; Every $5\times 10^5$ steps, we apply a safety check --- for the current iterate $\beta_t$, we compute the full certificate, find the coefficients where the optimality condition is violated and then add them back to the support of the $\beta_t$. If the maximum number of iteration is $3\times 10^6$, then in total we will have 6 times safety check.}

\end{itemize}
The following LIBSVM datasets are used, with four relatively small-scale ones and four large-scale ones. \footnote{All datasets can be downloaded from \url{https://archive.ics.uci.edu/ml/datasets.php} and \url{https://www.csie.ntu.edu.tw/~cjlin/libsvm/}.} 

\renewcommand{\arraystretch}{1.1}
\begin{table}[H]
	\begin{center}
	
		\begin{tabular}{c|c|c}
			\specialrule{.125em}{0ex}{0em} 
			Name & $m$ & $n$ \\ \specialrule{.125em}{0ex}{0em} 
			\texttt{colon-cancer} & 62 & 2,000 \\ \hline
			\texttt{leukemia} & 38 & 7,129 \\ \hline
			\texttt{breast-cancer} & 44 & 7,129 \\ \hline
			\texttt{gisette} & 6,000 & 5,000 \\ \specialrule{.125em}{0ex}{0em} 

		\end{tabular}
	\hskip3mm
	\begin{tabular}{c|c|c}
		\specialrule{.125em}{0ex}{0em} 
		Name & $m$ & $n$ \\ \specialrule{.125em}{0ex}{0em} 

		\texttt{arcene} & 200 & 10,000 \\ \hline
		\texttt{dexter} & 600 & 20,000 \\ \hline 
		\texttt{dorothea} & 1,150 & 100,000 \\ \hline
		\texttt{rcv1} & 20,242 & 47,236 \\ \specialrule{.125em}{0ex}{0em}
	\end{tabular}
		\caption{The considered datasets and their scales: $m$ is the number of samples and $n$ is the dimension of the problem. 	\label{tab:data}		}
	\end{center}
\end{table}

\subsubsection{Dimension reduction of screening schemes}

We first compare the support identification properties of Prox-SGD, FS-Prox-SGD and OS-Prox-SGD,
which are shown in Figure \ref{fig:lasso_support} (LASSO) and Figure \ref{fig:slr_support} (SLR), respectively. 
For each figure, two quantities are provide: size of support over number of epochs of $\beta_t$ for {\it solid lines} and elapsed time over number of epochs for {\it dashed lines}. 
For LASSO problem, we obtain the following observations,
\begin{itemize}
\item Prox-SGD, black lines in all figures, indeed does not have support identification property, as the size of support is oscillating and does not decrease.

\item \rev{Both FS-Prox-SGD and OS-Prox-SGD can effectively reduce the dimension of the iterates and provide CPU time-gain. The dimension reduction of OS-Prox-SGD {in general} is sharper than that of FS-Prox-SGD, which means it can significantly reduce the dimension of the problem at the very early stages.}
\end{itemize}

\begin{figure}[!ht]
	\centering
	\includegraphics[width=0.975\linewidth]{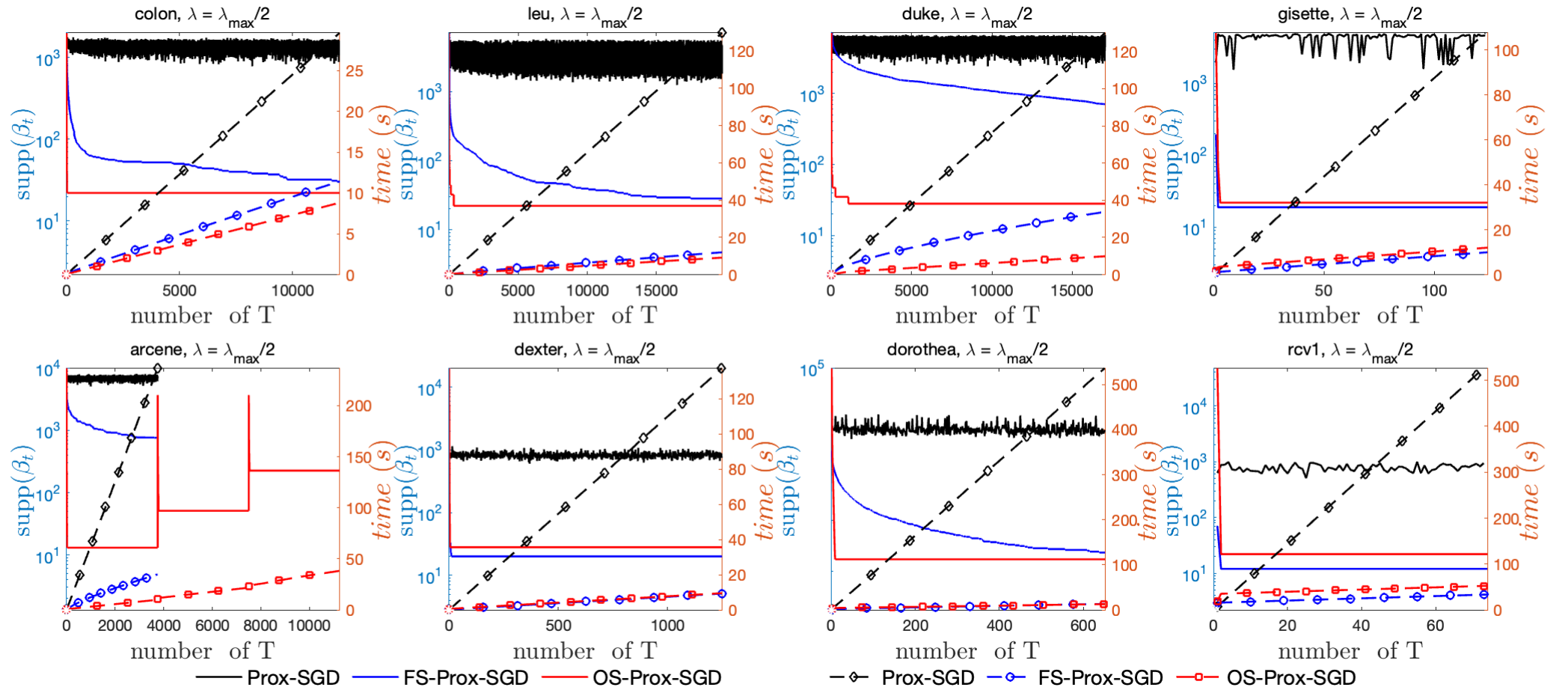} 	\\ [-3mm]		
	\caption{Dimension reduction vs wall clock time for LASSO problems. For all datasets, we set $\lambda = \frac{\lambda_{\textrm{max}}}{2}$. For each figure, two quantities are shown: {\it solid lines} for the dimension of the problem over iteration, {\it dashed lines} for CPU time over number of $T$. 
	} 
	\label{fig:lasso_support}
\end{figure}

\begin{remark}
It can be observed that for the {\tt arcene} dataset, the support of online screening has two jumps which is caused by our safety check operation. Once our safety check identifies unsafeness of the current iteration, we reset the iteration to the last safe state with extra random perturbation which results in the jump of support size.  When reset happens, we also increase the value of $w$ so as to make our online screening less aggressive. We shall discuss this in more details in Section \ref{sec:safety_check}. 
\end{remark}

For SLR problems, the choice of $\lambda$ is provided in each sub-figure of Figure \ref{fig:slr_support}. 
\rev{Overall, similar observations are obtained compared to those of LASSO problems. For the {\tt arcene} data, no resets caused by the safety check.} 
We observe from above that, for both problems, when online-screening works, it can achieve dimension reduction at the very early stage of the iteration, which means practically it is more attractive than the full-screening scheme, since in practice, stochastic algorithms are run for limited number of epochs.

\begin{figure}[!ht]
	\centering	
	\includegraphics[width=0.975\linewidth]{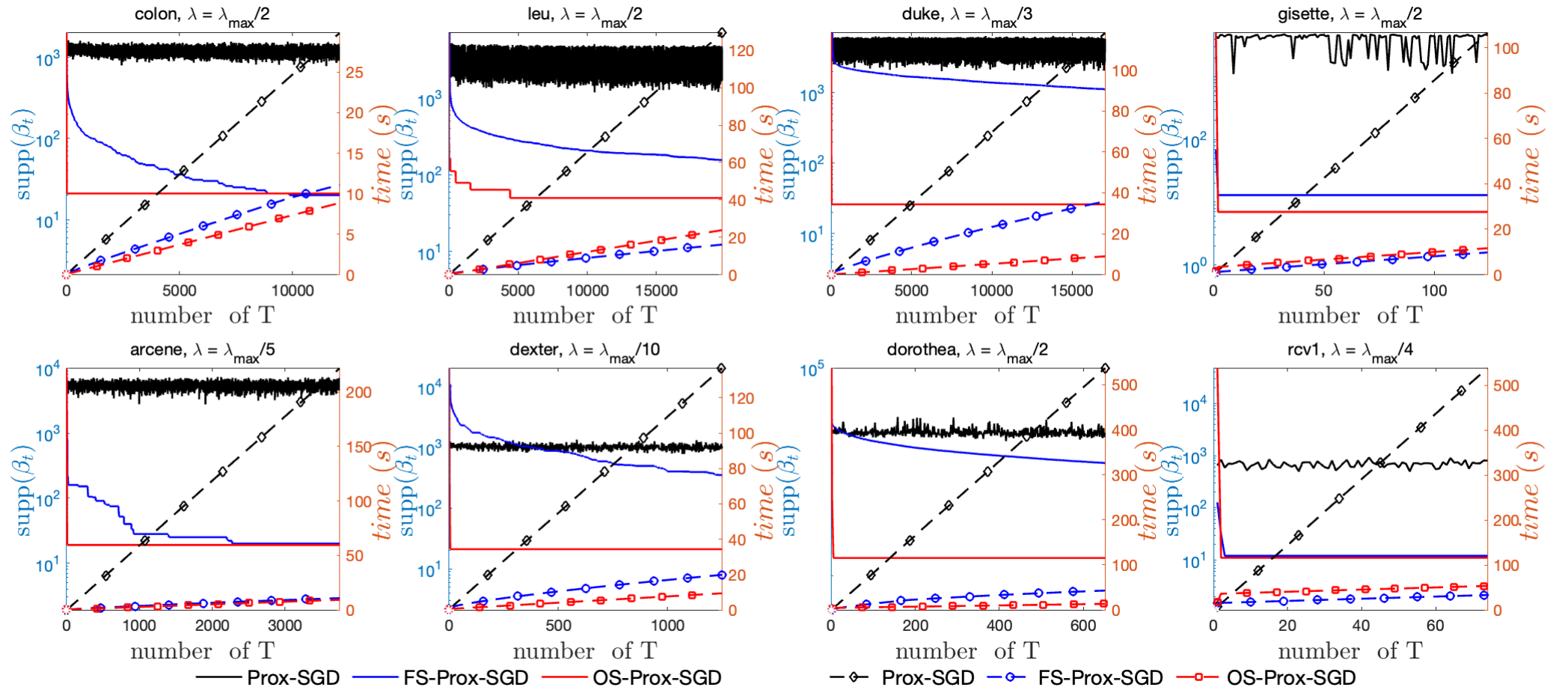} 	\\ [-3mm]		
	\caption{Comparison of support reduction and wall clock time for SLR problems. For each figure, two quantities are displayed: {\it solid lines} show the dimension of $\beta_t$ over number of epochs and {\it dashed lines} show the elapsed time over number of epochs. } 
	\label{fig:slr_support}
\end{figure}

\subsubsection{LASSO problem}

In this part, we present absolute error $\norm{\beta_t-\beta^\star}$ comparisons and solution quality comparisons for the LASSO problem. 
Error comparisons are displayed in Figure \ref{fig:lasso_error}. Similarly to the wall-clock time comparisons in Figure \ref{fig:lasso_support}, the faster algorithm yields faster error decays. \rev{For {\tt arcene} data, the resets of safety check also result in jump of error.}

\begin{figure}[!ht]
	\centering
	\includegraphics[width=0.975\linewidth]{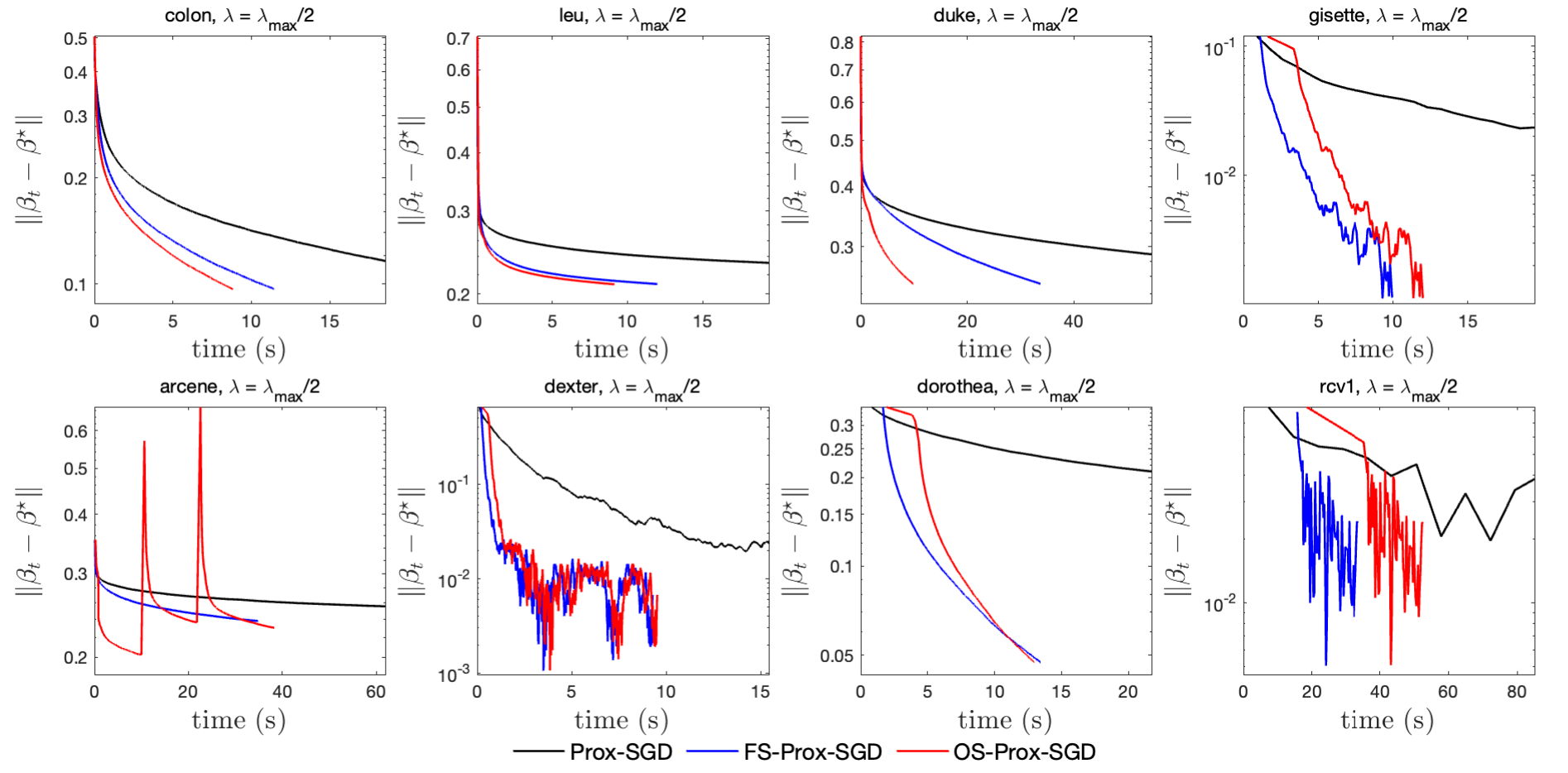} \\ [-3mm]	
	\caption{Comparison of errors $\norm{\beta_t-\beta^\star}$ against wall clock time for LASSO problems. For all datasets, the regularization parameter is $\lambda = \frac{\lambda_{\textrm{max}}}{2}$.} 
	\label{fig:lasso_error}
\end{figure}

In Figure \ref{fig:lasso_solution}, we provide comparisons of the final outputs obtained by the algorithms. 
For reference, the output of SAGA with optimality guarantee is included. 
\begin{itemize}
\item It can be observed that for Prox-SGD, non-identification can be observed by the large number of tiny values around or below $10^{-5}$. 

\item In general, there are discrepancies between the outputs of Prox-SGD schemes and the solution by SAGA, which means SGD schemes need more number of iterations. 

\item Screening can be effective in screening out these tiny values, with online-screening overall being slightly better than full-screening.
\end{itemize}
\begin{figure}[!ht]
	\centering
	\includegraphics[width=0.975\linewidth]{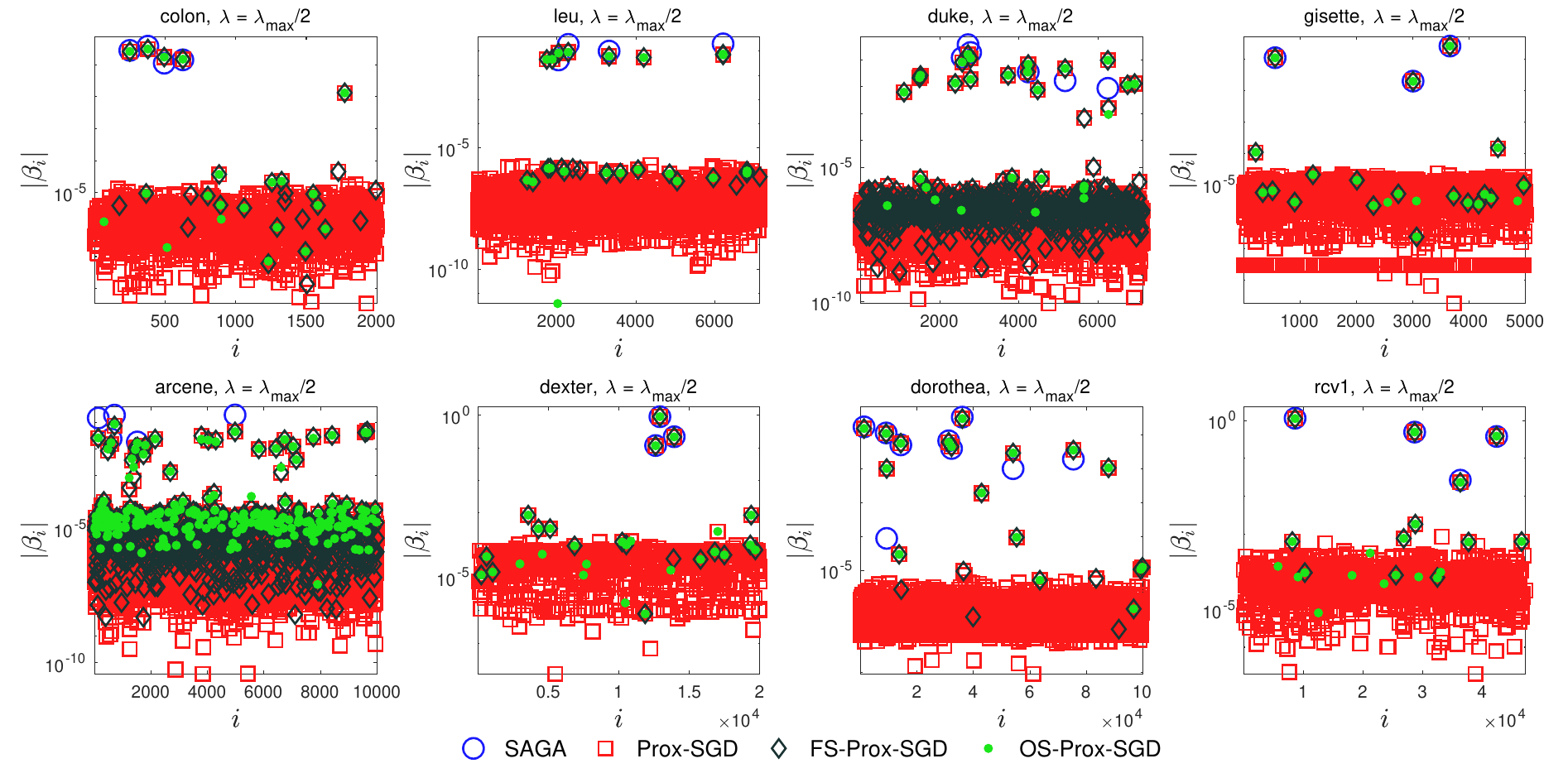} \\ 			
	\caption{Comparison of the solutions in terms of $\abs{\beta}$ for LASSO problems. We use the solution obtained by SAGA as `ground truth', and compare the final outputs of Prox-SGD, FS-Prox-SGD and OS-Prox-SGD. 
	} 
	\label{fig:lasso_solution}
\end{figure}

\subsubsection{Sparse logistic regression}

For SLR problems, the comparisons of error-time and solution quality are provided in Figure \ref{fig:slr_error} and Figure \ref{fig:slr_solution} respectively. 
Similar to the LASSO problem, the error comparison is in consistent with time comparisons of Figure \ref{fig:slr_support}. 

\begin{figure}[!ht]
	\centering
	\includegraphics[width=0.975\linewidth]{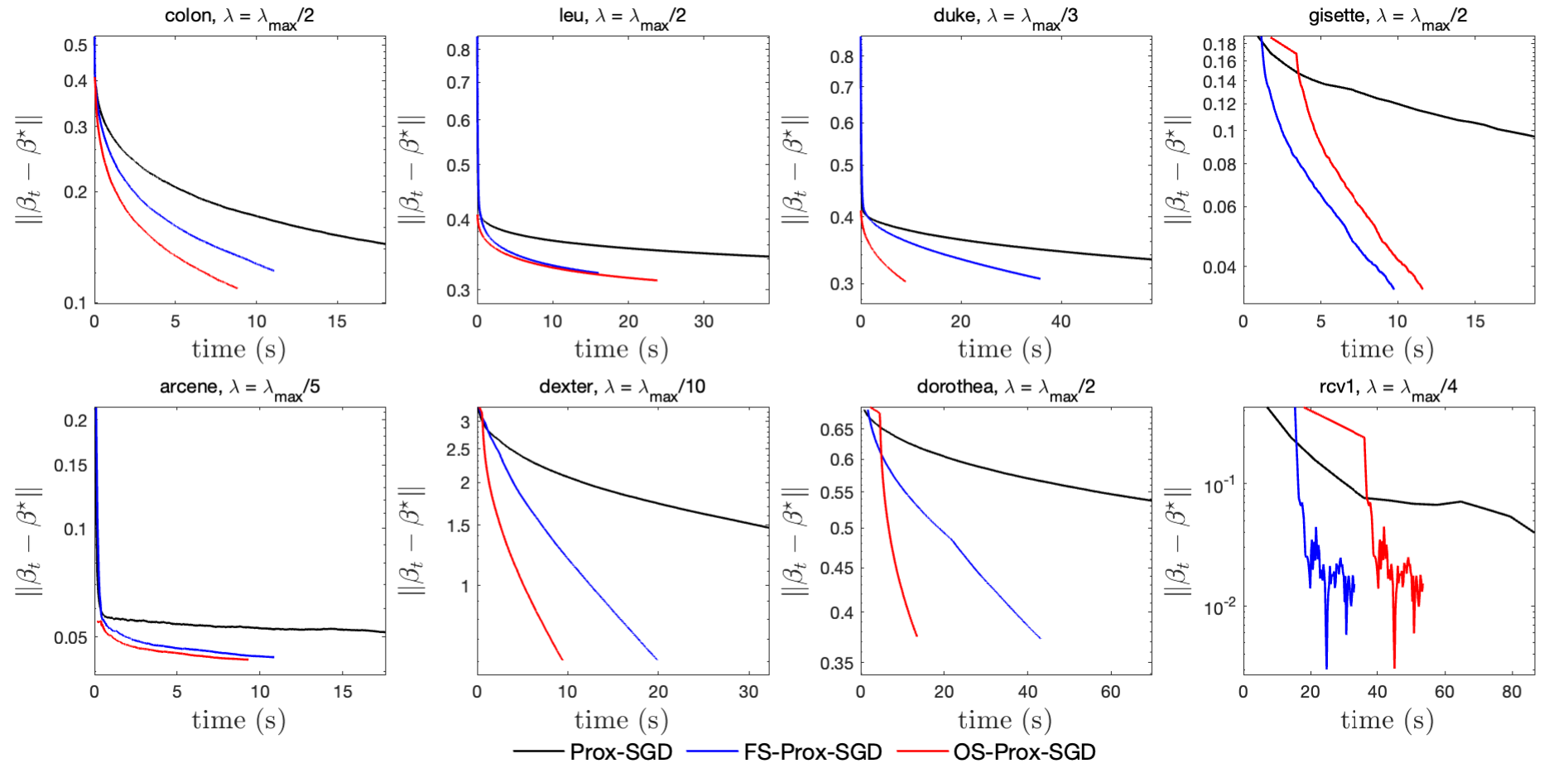}\\ [-3mm]		
	\caption{Comparison of error $\norm{\beta_t-\beta^\star}$ against wall clock time for SLR problems. } 
	\label{fig:slr_error}
\end{figure}

\begin{figure}[!ht]
	\centering
	\includegraphics[width=0.975\linewidth]{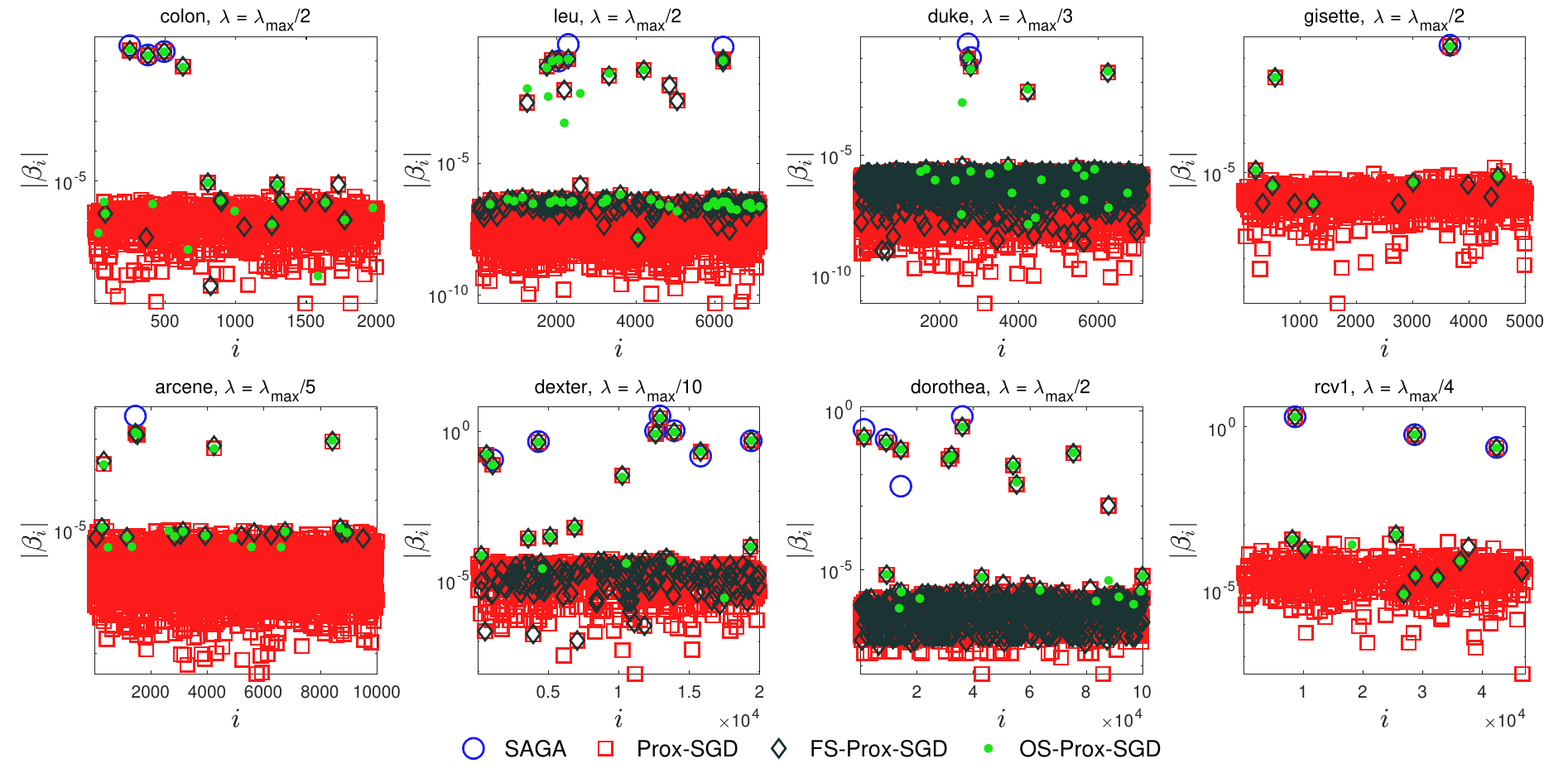} \\ [-3mm]	
	\caption{Comparison of the solutions in terms of $\abs{\beta}$ for SLR problems. We use the solution obtained by SAGA as `ground truth', and compare the final outputs of Prox-SGD, FS-Prox-SGD and OS-Prox-SGD. 
	}
	\label{fig:slr_solution}
\end{figure}

\subsubsection{Aggressiveness of $w$}\label{sec:safety_check}

In this last part of numeric experiments, we discuss the aggressiveness of the exponent parameter $w$ in Algorithm \ref{algo:screen} and comment on the resets caused by safety check of online-screening for LASSO on {\tt arcene} dataset. 
For the purpose of comparison, three initial choices of $w$ are tested, $w=0.51, 0.75$ and $0.99$. \rev{Note that $w$ should be smaller than $1$. } 
We describe the impact of $w$ over two quantities: dimension reduction and final output.

\rev{
In our implementation, the value of $w$ will increase by $0.1$ every time reset happens, until its value reaches $1$. 
The results for LASSO and {\tt arcene} dataset are provided below in Figure~\ref{fig:cmp_dexter}, from which we observe the followings
\begin{itemize}

\item {\bf Dimension reduction} For all choices of the exponent $w$, there is a sharp dimension reduction at the beginning stages of the iterations. The smaller the value of $w$, the sharper the reduction. 

\item {\bf Final output} The larger the value of $w$, the less sparse the output. Note that for initial value of $w=0.51$, after two resets, its values is increased to $0.71$. 
\end{itemize}
Finally, it is worth mentioning that the wall clock time for all three choices of $w$ are very close and around {10} seconds.
}
 
\begin{figure}[!ht]
	\centering
	\subfloat[Support size]{ \includegraphics[width=0.45\linewidth]{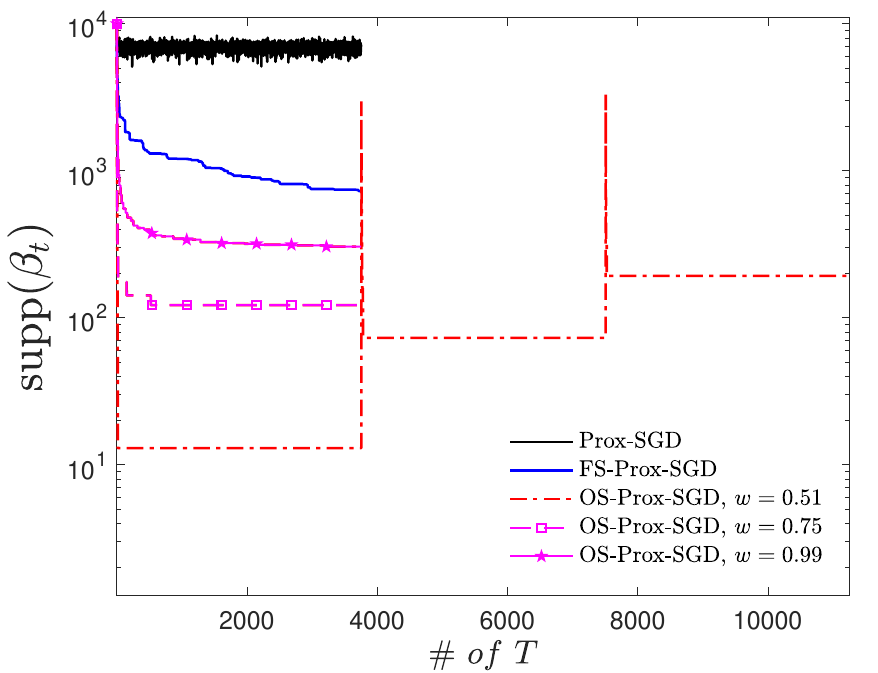} } 		
	\subfloat[Final output]{ \includegraphics[width=0.45\linewidth]{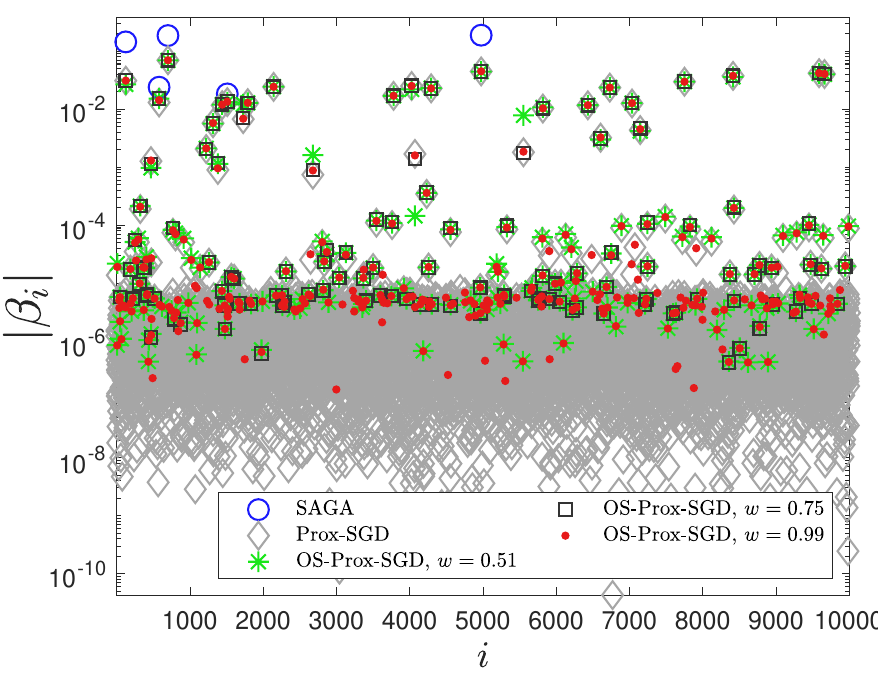} } \\ [-3mm]

	\caption{Comparison of the exponent parameter $w$ for LASSO problem and {\tt arcene} dataset. Figure (a) shows how the support decays over number of epochs and figure (b) the final outputs of the algorithms. } 
	\label{fig:cmp_dexter}
\end{figure}

\section{Conclusion}


Online optimization algorithms are widely used for solving large-scale problems arising from machine learning, data science and statistics. However, when combined with sparsity promoting regularizers, online methods can break the support identification property of these regularizers. In this paper, we combined the well established safe screening technique with online optimization methods which allows online methods to discard useless features along the iteration, hence achieving dimension reduction. Numerical result demonstrated that dramatic wall time gains can be achieved for classic regression tasks over real datasets.

\subsection*{Acknowledgements}
We would like to thank the anonymous reviewers who greatly help to improve the quality of this work. Jingwei Liang acknowledges support from the Shanghai Municipal Science and Technology Major Project (2021SHZDZX0102) and the support from SJTU and Huawei ExploreX Funding (SD6040004/033).

\vspace{\baselineskip}
\noindent The authors report there are no competing interests to declare.
\appendix


\section{Appendix}

\subsection{Preliminary}
\subsubsection{Convex analysis}

The sub-differential of a proper convex and lower semi-continuous function $\Omega: \RR^n \to \bbR \cup \ba{ +\infty}$ is a set-valued mapping defined by
\beq\label{eq:sub-diff}
\partial \Omega :  \RR^n \setvalued \RR^n ,~ \beta \mapsto \Ba{ Z\in\RR^n \,|\, \Omega( \beta') \geq \Omega( \beta) + \iprod{Z}{ \beta'- \beta} ,~~ \forall  \beta' \in \RR^n }.
\eeq

\begin{lemma}[Descent lemma \citep{bertsekas1999nonlinear}]\label{lem:descent}
	Suppose that $F: \RR^n \to \bbR$ is convex continuously differentiable and $\nabla F$ is $L$-Lipschitz continuous. Then, given any $ \beta, \beta'\in\RR^n$,
	\[
	F( \beta) \leq F(\beta') + \iprod{\nabla F(\beta')}{ \beta-\beta'} + \sfrac{L}{2}\norm{ \beta-\beta'}^2  .
	\]
\end{lemma}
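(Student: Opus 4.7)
The plan is to use the fundamental theorem of calculus together with the $L$-Lipschitz continuity of $\nabla F$. First I would write the difference $F(\beta) - F(\beta')$ as the line integral
$$F(\beta) - F(\beta') = \int_0^1 \iprod{\nabla F(\beta' + t(\beta-\beta'))}{\beta-\beta'}\,dt,$$
which is valid because $F$ is continuously differentiable.

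Next, I would add and subtract $\iprod{\nabla F(\beta')}{\beta-\beta'}$ inside the integrand to split off the linear term:
$$F(\beta) = F(\beta') + \iprod{\nabla F(\beta')}{\beta-\beta'} + \int_0^1 \iprod{\nabla F(\beta' + t(\beta-\beta')) - \nabla F(\beta')}{\beta-\beta'}\,dt.$$
It then remains to bound the residual integral from above.

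Finally, applying Cauchy--Schwarz pointwise in $t$ and invoking $L$-Lipschitz continuity of $\nabla F$ gives
$$\norm{\nabla F(\beta' + t(\beta-\beta')) - \nabla F(\beta')} \leq L\, t \norm{\beta-\beta'},$$
so the residual integral is at most $L\norm{\beta-\beta'}^2 \int_0^1 t\,dt = \tfrac{L}{2}\norm{\beta-\beta'}^2$, which yields the stated inequality.

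There is no real obstacle here; this is a textbook quadratic upper bound (the so-called descent lemma). Convexity of $F$ is not actually needed for the inequality itself, only differentiability and the Lipschitz gradient bound; convexity would enter if one wanted the matching lower bound $F(\beta) \geq F(\beta') + \iprod{\nabla F(\beta')}{\beta-\beta'}$. The only mild care is ensuring the integral representation applies, which follows from continuity of $\nabla F$ along the segment $[\beta',\beta]$.
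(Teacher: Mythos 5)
Your proof is correct and complete: the integral representation, the splitting off of the linear term, and the Cauchy--Schwarz plus Lipschitz bound on the residual constitute the standard argument for the descent lemma, which is exactly the proof in the cited reference (the paper itself states the lemma without proof). Your side remark that convexity is not needed for this upper bound is also accurate.
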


 \begin{lemma}[Fenchel-Young inequality]
 Let $F:\RR^n\to \RR$ be proper, convex and lower semicontinuous, then for all $p,x\in\RR^n$,
$
 F(x) + F^*(p) \geq \dotp{p}{x}
 $
 with equality if $p\in \partial F(x)$.
 \end{lemma}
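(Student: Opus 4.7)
The plan is to invoke the definition of the convex conjugate $F^*(p) \eqdef \sup_{z \in \RR^n}\{\dotp{p}{z} - F(z)\}$ directly. For the inequality, I would observe that applying the supremum definition at the particular test point $z = x$ immediately yields $F^*(p) \geq \dotp{p}{x} - F(x)$, which, after rearrangement, is exactly the claimed bound $F(x) + F^*(p) \geq \dotp{p}{x}$. This half of the lemma needs no further machinery and no case analysis; the assumptions that $F$ is proper, convex and lower semicontinuous are not actually used for the inequality itself, only to guarantee that $F^*$ is well defined.

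For the equality assertion, I would assume $p \in \partial F(x)$ and use the subgradient inequality from \eqref{eq:sub-diff}, namely $F(x') \geq F(x) + \dotp{p}{x'-x}$ for every $x' \in \RR^n$. Rearranging this as $\dotp{p}{x'} - F(x') \leq \dotp{p}{x} - F(x)$ for all $x'$, and then taking the supremum over $x'$ on the left-hand side, gives $F^*(p) \leq \dotp{p}{x} - F(x)$. Combined with the reverse inequality already established in the previous paragraph, this yields $F^*(p) = \dotp{p}{x} - F(x)$, which rearranges to the stated equality $F(x) + F^*(p) = \dotp{p}{x}$.

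Since this is a classical result stated as background in the appendix, there is no genuine obstacle to navigate. The only points requiring any care are (i) ensuring that the subgradient defining inequality is applied with $x$ fixed and $x'$ playing the role of the supremum variable, and (ii) keeping track of the fact that $F^*$ may take the value $+\infty$ in general, so that the inequality is interpreted in the extended reals; the lower semicontinuity and properness hypotheses in the lemma statement ensure $F^*$ is itself proper, convex and lower semicontinuous, but no deeper property (such as biconjugation $F^{**} = F$) is needed for this direction.
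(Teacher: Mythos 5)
Your proof is correct. The paper states this lemma as standard background in the appendix and does not supply a proof of its own, so there is nothing to compare against; your argument --- the inequality from the definition of $F^*$ as a supremum evaluated at $z=x$, and the equality case from the subgradient inequality \eqref{eq:sub-diff} rearranged and maximized over $x'$ --- is the canonical derivation and is exactly what one would write here.
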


\rev{
\subsubsection{A stochastic Arzela-Ascoli result}
We recall a stochastic  Arzela-Ascoli result from  \cite{andrews1992generic}; see also  \cite[Theorem 11.3.2]{rao2008course}. Let $\Xx\subset \RR^d$ be compact, for each $x\in\Xx$ and $n\in\NN$, let $G_n(x) \; (= G_n(x)_\lambda)$ 
be a real measurable function on $\Lambda$. Let $B(x,\delta)$ be the closed ball of radius $\delta$ centred at $x$.

\begin{definition}
We say that $G_n(x)$ is strongly stochastically equi-continuous (SSE) on $\Xx$ if for all $\epsilon>0$ there exists $\delta>0$ such that
\[
\limsup_{n\to\infty} \PP\BPa{\sup_{x\in\Xx}\sup_{x'\in B(x,\delta)} \abs{G_n(x)-G_n(x')}>\epsilon} <\epsilon,
\]
and $\sup_{n\in\NN} \abs{G_n(x)}<\infty$ for all $x\in\Xx$ almost surely.
\end{definition}

{\noindent}To check that $G_n(x)$ is SSE, it is sufficient to show that
\begin{itemize}
\item[a)] $G_n(x) =  f_n(x)-\bar f_n(x)$ where $\bar f_n(x)$ is a non-random function that is continuous in $x$ uniformly over $x\in\Xx$ and $n\in\NN$.
\item[b)] For all $x$, $G_n(x)$ converge to 0 almost surely.
\item[c)] $\abs{f_n(x)-f_n(x')} \leq B_n\norm{x-x'}$ for all $x',x\in\Xx$ almost surely, where $B_n$ is a random variable such that $B_n = \Oo(1)$ almost surely.
\end{itemize}

\begin{theorem}[Stochastic Arzela-Ascoli]\label{thm:arzela}
Let  $\Xx$ be compact. Suppose that $G_n$ is SSE   and  point-wise convergent to 0 (for all $x$, $G_n(x) \to 0$ almost surely). Then almost surely
\[
\sup_{x\in\Xx} \abs{G_n(x)} \to 0, \qquad n\to\infty . 
\] 
\end{theorem}
}

\subsection{Derivation of the dual problem \eqref{eq:onlinedual}} \label{sec:onlinedual}

Denote $f_y(z) \eqdef f(z;y)$ and $v_\beta(x,y) = f_y'(x^\top \beta)$, applying  the Fenchel-Young (in)equality twice we obtain
\[
\begin{aligned}
\Pp(\beta) 
& =   \EE_{(x,y)}[v_\beta(x,y) x^\top \beta ] - \EE_{(x,y)}[f_y^*(v_\beta(x,y) )] + \lambda \Omega(\beta)\\
&=   - \EE_{(x,y)}[f_y^*(v_\beta(x,y) )] - \lambda\Pa{ \dotp{-\tfrac{1}{\lambda}\EE_{(x,y)}[v_\beta(x,y) x ]}{\beta} -  \Omega(\beta) }\\
&\geq - \EE_{(x,y)}[f_y^*(v_\beta(x,y) )] -\lambda \Omega^*\Pa{-\tfrac{1}{\lambda}\EE_{(x,y)}[v_\beta(x,y) x ]}
\end{aligned}\]
where the final line is an equality if  $-\frac{1}{\lambda}\EE_{(x,y)}[v_\beta(x,y) x] \in \partial \Omega(\beta)$, which is the case at an optimal primal solution $\beta^\star$. Therefore, it follows that
\[
\min_{\beta\in\RR^n}\Pp(\beta) = - \EE_{(x,y)}[f_y^*(v_{\beta^\star}(x,y) )] - \lambda \Omega^*\Pa{-\tfrac{1}{\lambda}\EE_{(x,y)}[v_{\beta^\star}(x,y) x ]} . 
\]
\rev{
On the other hand, taking any $\Lambda$-measurable function $v$,
\[
\begin{aligned}
\Dd(v)\eqdef &- \EE_{(x,y)}[f_y^*(v(x,y) )] -\lambda \Omega^*\Pa{-\tfrac{1}{\lambda}\EE_{(x,y)}[v(x,y) x ]}\\
 &\leq - \EE_{(x,y)}[f_y^*(v(x,y) )] - \lambda\Pa{ \dotp{-\tfrac{1}{\lambda}\EE_{(x,y)}[v(x,y) x ]}{\beta} -  \Omega(\beta) }
 \leq \EE_{(x,y)}[ f_y(x^\top \beta)] +\lambda\Omega(\beta)
\end{aligned}\]
where we apply the  Fenchel-Young inequality for the first inequality and the definition of convex conjugate for the second.

}
 Therefore, the dual problem is $\max_{v}  \Dd(v)$
and strong duality holds. Finally, note that $\Omega^*$ is the indicator function on the dual constraint set $\Kk_{\la,\eta}$.

\subsection{Proofs of Section \ref{sec:step1}}\label{proof:section3}

We prove Propositions  \ref{prop:obj_conv} and \ref{prop:dualconv} in this section. The proofs are provided for completeness although they use standard techniques, see for example \cite{lee2012manifold}. Similar results can be found in \cite{mairal2013stochastic} (though we relax the condition of $\sum_t \mu_t^2 \sqrt{t} < +\infty$ to simply $\sum_t \mu_t^2<+\infty$). We make use of the following lemma. 

\begin{lemma}[{Super-martingale convergence \citep{robbins1971convergence}}]\label{supermartingale}
Let $\Ff_k$ be a set of random variables with $\Ff_k \subset \Ff_{k+1}$ for all $k\in\NN$. Let $Y_k, Z_k, W_k$ be non-negative random variables which are functions of random variables in $\Ff_k$, such that
\begin{itemize}
\item[{\rm (i)}] $\EE[Y_{k+1}| \Ff_k] \leq Y_k + W_k - Z_k$. 
\item[{\rm (ii)}] $\sum_k W_k <+\infty$ with probability 1. 
\end{itemize}
Then, $\sum_k Z_k <+\infty$ and $Y_k$ converges to a non-negative random variable $Y$ with probability~1.
\end{lemma}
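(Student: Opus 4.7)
The plan is to reduce the statement to the classical convergence theorem for non-negative supermartingales by a stopping-time truncation that neutralizes the (possibly unbounded, but a.s.\ summable) contribution of $W_k$. Concretely, I would first introduce the auxiliary process
\[
T_n \eqdef Y_n + \msum_{j=0}^{n-1} Z_j - \msum_{j=0}^{n-1} W_j ,
\]
which is adapted to $\Ff_n$, and verify directly from the hypothesis $\EE[Y_{n+1}\mid \Ff_n]\leq Y_n + W_n - Z_n$ that $\EE[T_{n+1}\mid\Ff_n] \leq T_n$; so $T_n$ is a supermartingale. It is not in general non-negative, so the classical a.s.\ convergence theorem does not apply immediately.

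To remedy this, I would localize with the stopping times $\tau_M \eqdef \inf\{n : \sum_{j<n} W_j > M\}$ for $M>0$ and work with the shifted, stopped process $\tilde T_n^M \eqdef T_{n\wedge \tau_M} + M$. On $\{n\leq \tau_M\}$ we have $\sum_{j<n} W_j \leq M$, so $\tilde T_n^M \geq Y_{n\wedge\tau_M} + \sum_{j<n\wedge\tau_M} Z_j \geq 0$; since stopping preserves the supermartingale property, $\tilde T_n^M$ is a non-negative supermartingale and hence converges almost surely by the standard theorem. Because $\sum_k W_k < +\infty$ almost surely, $\PP(\tau_M = \infty)\to 1$ as $M\to \infty$; intersecting the corresponding events yields that $T_n$ itself converges almost surely to some finite limit.

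From the convergence of $T_n$ and of $\sum_{j<n} W_j$, I would deduce $\sum_j Z_j < +\infty$ a.s.\ by contradiction: if $\sum Z_j = +\infty$ on a positive-probability event, then on that event $T_n \geq -\sum_j W_j + \sum_{j<n} Z_j \to +\infty$, contradicting convergence. Once $\sum Z_j$ is known to be a.s.\ finite, the identity $Y_n = T_n - \sum_{j<n} Z_j + \sum_{j<n} W_j$ expresses $Y_n$ as a sum of three a.s.\ convergent sequences, so $Y_n \to Y_\infty$ a.s., and $Y_\infty \geq 0$ since each $Y_n \geq 0$.

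The main obstacle I anticipate is Step 2: carefully justifying that the shifted stopped process is genuinely a non-negative supermartingale. One must check that $\tau_M$ is a stopping time with respect to $(\Ff_n)$ (since $\sum_{j<n} W_j$ is $\Ff_{n-1}$-measurable by hypothesis), that the optional stopping identity gives $\EE[T_{(n+1)\wedge \tau_M}\mid \Ff_n] \leq T_{n\wedge\tau_M}$ without integrability pathologies (the negative part is bounded by $M$ plus the positive $Y_n + \sum Z_j$ contribution, so the conditional expectations are well-defined), and that the event $\{\tau_M = \infty\}$ correctly captures the regime where the truncation is inert. Everything afterwards is routine bookkeeping.
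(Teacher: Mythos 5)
The paper does not prove this lemma at all: it is quoted as a black-box citation to Robbins and Siegmund, so there is no in-paper argument to compare against. Your proof is, in essence, the classical one (and close to the original): pass to the compensated process $T_n = Y_n + \sum_{j<n}Z_j - \sum_{j<n}W_j$, which is a supermartingale, localize to make it non-negative, invoke the non-negative supermartingale convergence theorem, and then unwind. The structure is sound and all the main deductions (the supermartingale inequality for $T_n$, the contradiction argument giving $\sum_j Z_j<+\infty$, and the recovery of $Y_n$ as a sum of three convergent sequences) check out.

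One small indexing point needs patching. With your definition $\tau_M = \inf\{n: \sum_{j<n}W_j > M\}$, the claim that $\sum_{j<n}W_j\leq M$ on $\{n\leq \tau_M\}$ fails precisely at $n=\tau_M$: by definition $\sum_{j<\tau_M}W_j > M$, so the stopped process $T_{n\wedge\tau_M}+M$ can dip below zero at the stopping time itself. The fix is to take $\tau_M \eqdef \inf\{n:\sum_{j\leq n}W_j > M\}$ instead; this is still a stopping time since $\sum_{j\leq n}W_j$ is $\Ff_n$-measurable, and now $m\leq\tau_M$ forces $\sum_{j<m}W_j=\sum_{j\leq m-1}W_j\leq M$, so the shifted stopped process is genuinely non-negative. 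Everything downstream is unaffected: $\{\tau_M=+\infty\}=\{\sum_j W_j\leq M\}$ still exhausts the probability space as $M\to\infty$. You may also wish to remark that the statement as given does not assume integrability of $Y_0$; either add $\EE[Y_0]<+\infty$ (harmless for the application in the paper, where $Y_0$ is bounded) or note that the upcrossing inequality for non-negative supermartingales, $(b-a)\EE[U_N[a,b]]\leq a$, does not require it, so the convergence theorem still applies. Neither issue is a gap in the idea; both are routine repairs.
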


\begin{lemma}\label{lem:mairal}
For some $\ens{\mu_t}_t\subset (0,1)$ and random variables $\ens{f_t}_t \subset \RR$. 
Suppose $\sum_j \mu_j = +\infty$ and $\sum_j \mu_j^2 <+\infty$.
Let
 $\bar f_t\eqdef \mu_t f_t + (1-\mu_t) \bar f_{t-1}$ with $\bar f_1 = f_1$ and  $\eta_j^{(t)} \eqdef \mu_j \prod_{i=j+1}^{t} (1-\mu_{i})$. 
 Then the following results hold
 \begin{enumerate}
 \item[{\rm (i)}] $ \bar f_t = \sum_{j=1}^n \eta_j^{(t)} f_j $. 

 \item[{\rm (ii)}] $\sum_j \eta_j^{(t)}  = 1$.
 \item[{\rm (iii)}] $\lim_{t\to+\infty}\sum_{j=1}^t (\eta_j^{(t)} )^2 =  0$. \rev{In particular, 
 \[
 \msum_{j=1}^t (\eta_j^{(t)} )^2\lesssim \min_{m<t} { \msum_{j=m}^t \mu_j^2 + \exp\Pa{ -2\msum_{j=m}^t \mu_j } } \eqdef \epsilon_t.
 \]
  If $\mu_j = 1/j^\gamma$ with $\gamma\in (0.5,1]$, then $\epsilon_t = \Oo(t^{-\gamma+\frac12})$.}
 \item[{\rm (iv)}] Suppose that $\EE[f_j] = \EE[f_1]$ where $\ens{f_j}$ are iid random variables with $\abs{f_j} \leq B$. Then, $\lim_{n\to+\infty}\bar f_t = \EE[f_1]$ almost surely \rev{and
\[
\PP\Ppa{\abs{\bar f_t - \EE[f_1]} \geq v} \leq 2 \exp\BPa{-\sfrac{v^2}{8B^2 \sum_{j=1}^t(\eta_j^{(t)})^2  }} .
\]}
 \end{enumerate}
\end{lemma}
\begin{proof}
The first two statements are straightforward. For the third one, we have as $t\to+\infty$
\[
\begin{aligned}
\ssum_{j=1}^t (\eta_j^{(t)} )^2  
&= \ssum_{j=1}^t \mu_j^2 \mprod_{i=j+1}^{t} (1-\mu_{i})^2 
\leq \ssum_{j=m}^t \mu_j^2 + \mprod_{i=m}^{t} (1-\mu_{i})^2 \ssum_{j=1}^{m-1} \mu_j^2 . 
\end{aligned}
\]
Since by assumption, $\sum_j \mu_j^2 <+\infty$, we have  $\sum_{j=m}^t \mu_j^2 \to 0$  as $m,t\to+\infty$.
Moreover, $\lim_{t\to+\infty} \sum_{i=m}^t \mu_i = +\infty$. As a result, we have
\[
\begin{aligned}
 \mprod_{i=m}^{t} (1-\mu_{i})^2 \leq  \mprod_{i=m}^{t} \exp(-2\mu_{i}) = \exp\bPa{-2\msum_{i=m}^t \mu_i} \to 0 . 
\end{aligned}
\]
%
For $m\leq t$, define $Y_m \eqdef  \sum_{j=1}^m \eta_j^{(t)} (f_j   - \EE[f_1])$. Then, $\ens{Y_m}_{m\leq t}$ is a Martingale and $\abs{Y_m - Y_{m-1}} \leq 2\eta_m^{(t)} B$. By Azuma-Hoeffding inequality, given any $v>0$,
\[
\PP\Ppa{\abs{Y_t} \geq v} \leq 2 \exp\BPa{-\sfrac{v^2}{8B^2 \sum_{j=1}^t(\eta_j^{(t)})^2  }} .
\]
Hence $\bar f_t $ converges to $\EE[f_1]$ in probability.

 To show that it converges almost surely,  let $Y_t \eqdef \abs{(\bar f_t - \EE[f_1])}^2 $. We will make use of Lemma \ref{supermartingale} to show that this converges to 0 almost surely.   Note that
\[
\begin{aligned}
Y_t
&= \abs{(1-\mu_t)(\bar f_{t-1} - \EE[f_1] )+\mu_n  (f_t -  \EE[f_1] )}^2 \\
&= (1-\mu_n)^2 Y_{t-1} +\mu_n ^2  (f_t -  \EE[f_1] )^2 + (1-\mu_t) \mu_n (\bar f_{t-1} - \EE[f_1] )   (f_t -  \EE[f_1] ).
\end{aligned}\]
Taking expectation with respect to $\ens{f_j}_{j=1}^{t-1}$, it follows that
\[
\begin{aligned}
\EE_{t-1}[ Y_t ] &= (1-\mu_t)^2 Y_{n-1} +\mu_t ^2 \EE_{t-1}[  (f_t -  \EE[f_1] )^2 ] \leq Y_{t-1} + 4B^2 \mu_t^2.
\end{aligned}\]
Since $\sum_t \mu_t^2<+\infty$, it follows from Lemma \ref{supermartingale} that $Y_t$ converges almost surely, and this converges almost surely to 0 since $Y_t$ converges to 0 in probability. In particular, $\bar f_t$ converges to $\EE[f_1]$ almost surely. 
\end{proof}

Now we are ready to prove  Proposition \ref{prop:obj_conv}. 

\begin{proof}[Proof of Proposition \ref{prop:obj_conv}]
The proof makes use of Lemma \ref{lem:mairal}, which can be applied thanks to our assumption on $\mu_t$ in Remark \ref{eq:mu_assump}, that $\mu_t = t^{-u}$ for $u\in (0.5,1]$ such that $\sum_t \mu_t = +\infty$ and $\sum_t \mu_t^2 < + \infty$. 

\paragraph{Claim (i)} First note that for each $\beta$, with probability 1, $\abs{ \bar P^{(t)}(\beta) - P(\beta) } \to 0$ as $ t\to+\infty$, by (iv) of Lemma \ref{lem:mairal}. 
Note that $\bar P^{(t)}(\beta) - P(\beta)  = \bar F^{(t)}(\beta) - F(\beta) $ is \rev{strongly stochastically}  equi-continuous in $\beta$: by the mean value theorem there exists $\ens{\xi_s}_{s=1}^t \subset \Bb_R$, where $ \Bb_R$ is the ball of radius $R$ with $R \eqdef \sup_{x\in\Xx, \beta\in\Oo} \abs{\dotp{x}{\beta}}$, such that 
\[
\abs{\bar F^{(t)}(\beta)- \bar F^{(t)}(\beta')} = \abs{\ssum_{s=1}^t \eta_s^{(t)} f'_{y_s}(\xi_s) x_s^\top(\beta - \beta') } . 
\]
 For some  point $\xi_0\in \Bb_R$, 
\[
\begin{aligned}
\abs{\bar F^{(t)}(\beta)- \bar F^{(t)}(\beta')} &\leq  \abs{\ssum_{s=1}^t \eta_s^{(t)} (f'_{y_s}(\xi_s) -f'_{y_s}(\xi_0) )  x_s^\top(\beta - \beta') } 
+ \abs{\ssum_{s=1}^t \eta_s^{(t)} f'_{y_s}(\xi_0)  x_s^\top(\beta - \beta') } \\
&\leq  L \msum_{s=1}^t \eta_s^{(t)} \abs{\xi_s - \xi_0} \norm{x_s} \norm{\beta - \beta'} + \abs{\Pa{\ssum_{s=1}^t \eta_s^{(t)} f'_{y_s}(\xi_0)  x_s}^\top(\beta - \beta') } 
\end{aligned}\]
where we have used that fact that $f'_{y_s}$ is $L$-Lipschitz. By boundedness of $\Bb_R$ and $\Xx$, there exists $B$ such that
\[
 L \msum_{s=1}^t \eta_s^{(t)} \abs{\xi_s - \xi_0} \norm{x_s} \norm{\beta - \beta'}  \leq B   \norm{\beta - \beta'}  \msum_{s=1}^t \eta_s^{(t)}  = B  \norm{\beta - \beta'}.
 \]
By Lemma \ref{lem:mairal}~(iv), we know that with probability 1,
\[
\msum_{s=1}^t \eta_s^{(t)} f'_{y_s}(\xi_0)  x_s  \to \EE[f'_y(\xi_0) x] , \qquad  t\to+\infty.
\]
\rev{
Therefore, we arrive at
$
 \abs{\Pa{\ssum_{s=1}^t \eta_s^{(t)} f'_{y_s}(\xi_0)  x_s}^\top(\beta - \beta') }  \leq K_t \norm{\beta - \beta'}
$
 where $K_t \eqdef  \norm{\ssum_{s=1}^t \eta_s^{(t)} f'_{y_s}(\xi_0)  x_s}= \Oo(1)$. It follows that \[\abs{\bar F^{(t)}(\beta) - F(\beta)} \leq (B+K_t) \norm{\beta - \beta'}\] is stochastically equi-continuous in $\beta$ on compact sets.
} 
Hence, by Arzela-Ascoli (Theorem \ref{thm:arzela}), it follows that $\bar P^{(t)}$ converges uniformly to $P$ on compact sets.

%
\paragraph{Claim (ii)} 
Denote $Z^{(t)}(\beta) \eqdef -\frac{1}{\lambda}   \sum_{s=1}^t \eta_{s}^{(t)} f'_{y_s}(x_{s}^\top \beta ) x_{s}$.  Note that given $\beta^{(t)} \in\Argmin_\beta \bar P^{(t)}(\beta)$, $Z^{\star,(t)}= Z^{(t)}(\beta^{(t)})$.  We show convergence in the dual norm $\Omega^D$ (and convergence in infinity norm is then immediate from equivalence of norms in finite dimensions).
By the triangle inequality and recalling that $Z^* = \EE[-\tfrac{1}{\lambda} f'(x^\top \beta^\star, y ) x]$,
\begin{equation}\label{eq:conv_1}
\begin{aligned}
\Omega^D\Pa{Z^{(t)}(\beta^{(t)}) - Z^* } 
&\leq  \Omega^D\Pa{Z^{(t)}(\beta^{(t)}) -  Z^{(t)}(\beta^{\star}) } 
+  \Omega^D\Pa{Z^{(t)}(\beta^{\star}) - \EE[-\tfrac{1}{\lambda} f'(x^\top \beta^\star, y ) x] } .
\end{aligned}
\end{equation}
 To bound the first term on the RHS, let $\theta_s^{(t)} = f'(x_{s}^\top \beta^{(t)}, y_{s})$ and $\theta^\star_s =f'(x_{s}^\top \beta^\star, y_{s})$, then
\begin{align*}
\Omega^D\pa{Z^{(t)}(\beta^{(t)} + Z^{(t)}(\beta^*)} 
&= \sfrac{1}{\lambda} \Omega^D\big(  \ssum_{s=1}^t \eta_t^{(t)} \pa{ \theta_s^{(t)}  - \theta_s^*} x_s \big)\\
&\leq  \sfrac{1}{\lambda}   \sqrt{\ssum_{s=1}^t \eta_t^{(t)} \pa{ \theta_s^{(t)}  - \theta_s^*}^2}\sqrt{ \ssum_{s=1}^t\eta_s^{(t)} \Omega^D\left( x_s \right)^2}.
\end{align*} 
 By strong convexity of $\bar D^{(t)}$ (c.f. the proof  of Lemma \ref{lem:radius_gap}), we have
\[
\begin{aligned}
\sfrac{1}{2L} \msum_{s=1}^t \eta^{(t)}_s \abs{ \theta_{s}^{(t)} - \theta^{\star}_s}^2  &\leq \bar D^{(t)}(\theta^{(t)}) - \bar D^{(t)}(\theta^\star) \rev{+ \msum_{s=1}^t \eta_s^{(t)} f_{y_s}(0) \; \Pa{ \Omega^D(Z^{(t)}(\beta^{\star}))-1 }_+ }\\
&\leq \bar P^{(t)}(\beta^\star) - \bar D^{(t)}(\beta^\star) \rev{+ \msum_{s=1}^t \eta_s^{(t)} f_{y_s}(0) \;  \Omega^D(Z^{(t)}(\beta^{\star}) -\EE[Z^{(t)}(\beta^{\star})] )}.
\end{aligned}
\]
It follows that 
\begin{equation}\label{eq:interm}
\begin{aligned}
&\Omega^D\Pa{Z^{(t)}(\beta^{(t)}) -  Z^\star }  \\
&\leq \Omega^D\Pa{Z^{(t)}(\beta^{\star}) -\EE[Z^{(t)}(\beta^{\star})]} +\sfrac{1}{\la} \sqrt{ \ssum_{s=1}^t \eta^{(t)}_s \abs{ \theta_{s}^{(t)} - \theta^{*}_s}^2  } \sqrt{\ssum_{s=1}^t \eta_s^{(t)}\Omega^D(x_s)^2}\\
&\leq \Omega^D\Pa{Z^{(t)}(\beta^{\star}) -\EE[Z^{(t)}(\beta^{\star})]} +  \sfrac{2BL}{\la} \sqrt{ \bar P^{(t)}(\beta^\star) - \bar D^{(t)}(\beta^\star) \rev{+ G \; \Omega^D(Z^{(t)}(\beta^{\star}) -\EE[Z^{(t)}(\beta^{\star})] )} }
\end{aligned}
\end{equation}
where $G\geq \sup_{y\in \Yy} \abs{f_{y}(0)}$ and $B\geq \sup_{x\in\Xx} \Omega^D(x)$.

\paragraph{Almost sure convergence}
Note that   $\EE[\bar P^{(t)}(\beta^\star) - \bar D^{(t)}(\beta^\star) ] = 0$ by optimality of $\beta^\star$ and $\abs{ \bar P^{(t)}(\beta^\star) - \bar D^{(t)}(\beta^\star) }$ converges to zero in expectation and almost surely by Lemma \ref{lem:mairal} (iv).
So, by applying Lemma \ref{lem:mairal} (iv) to $Z^{(t)}(\beta^{\star}) -\EE[Z^{(t)}(\beta^{\star})]$ and  $\bar P^{(t)}(\beta^\star) - \bar D^{(t)}(\beta^\star)$ in  \eqref{eq:interm}, we have for $t\to+\infty$, \[\Omega^D\bPa{Z^{(t)}(\beta^{\star}) - Z^* } \to 0 . \] 

\paragraph{Convergence in expectation}
 Taking expectations in \eqref{eq:interm} yields
\[
\begin{aligned}
&\EE[\Omega^D\pa{Z^{(t)}(\beta^{(t)}) -  Z^\star} ]  
\leq \EE[\Omega^D(Z^{(t)}(\beta^{\star}) -\EE[Z^{(t)}(\beta^{\star})] )] \\
&\qquad\qquad + \sfrac{ 2BL}{\la} \sqrt{ \EE[\bar P^{(t)}(\beta^\star) - \bar D^{(t)}(\beta^\star) \rev{+ G \; \Omega^D(Z^{(t)}(\beta^{\star}) -\EE[Z^{(t)}(\beta^{\star})] )}] }\\
&= \EE[\Omega^D(Z^{(t)}(\beta^{\star}) -\EE[Z^{(t)}(\beta^{\star})] )] + \sfrac{ 2BL \sqrt{G}}{\la} \sqrt{ \EE[ \rev{ \Omega^D(Z^{(t)}(\beta^{\star}) -\EE[Z^{(t)}(\beta^{\star})] )}] }
\end{aligned}\]
where we applied Jensen's inequality for the inequality and used optimality of $\beta^{\star}$ to deduce $\EE[\bar P^{(t)}(\beta^\star) - \bar D^{(t)}(\beta^\star)] = 0$.
To bound the RHS, we note that (by the equivalence of norms) for some $C>0$,
\[
 \EE[\Omega^D(Z^{(t)}(\beta^{\star}) -\EE[Z^{(t)}(\beta^{\star})] )]  \leq  C \EE[\norm{Z^{(t)}(\beta^{\star}) -\EE[Z^{(t)}(\beta^{\star})] }_\infty].
 \] We then  apply  Lemma \ref{lem:mairal} to
 $f_s \eqdef \pa{ f_{y_s}'(x_s^\top \beta) x_s}_k$ for each $k=1,\ldots, n$ followed by the union bound to obtain
 \[
 \PP\pa{\norm{Z^{(t)}(\beta^{\star}) -\EE[Z^{(t)}(\beta^{\star})] }_\infty>v} \leq 2n \exp\bPa{-\sfrac{v^2}{8 G^2 B \sum_{s=1}^t (\eta_s^{(t)})^2}}.
 \]
 Therefore,
 \[
\begin{aligned}
  \EE \pa{\norm{Z^{(t)}(\beta^{\star}) -\EE[Z^{(t)}(\beta^{\star})] }_\infty>v}  &\leq 2n \int_0^\infty \exp\bPa{-\sfrac{v^2}{8 G^2 B \sum_{s=1}^t (\eta_s^{(t)})^2}} \mathrm{d}v
  \\
  &\lesssim 2n  \sqrt{ G^2 B \ssum_{s=1}^t (\eta_s^{(t)})^2
 } = \Oo(n \epsilon_t)
  \end{aligned}\]
  It follows $\EE[ \Omega^D(Z^{(t)}(\beta^{(t)}) - Z^\star)] = \Oo(\sqrt{\epsilon_t})$ where the implicit constant depends on $n$. \qedhere


%
\end{proof}

\begin{proof}[Proof of Proposition \ref{prop:dualconv}]

Let $\Ff_t$ be the $\sigma$-algebra generated by  $\ens{(x_s,y_s)}_{s\leq t}$. Note
\[
\begin{aligned}
&-\sfrac{1}{\lambda}\msum_{s=1}^t \eta_s^{(t)} \theta_s x_{s} - Z^\star \\
&=\sfrac{-1}{\lambda} \msum_{s=1}^t \eta_s^{(t)} \pa{ f_{y_{s}}'(x_{s}^\top \beta_s) x_{s} -   f_{y_{s}}'(x_{s}^\top \beta^{\star}) x_{s}   } +  \msum_{s=1}^t \eta_s^{(t)} \Pa{ \sfrac{-1}{\lambda} f_{y_{s}}'(x_{s}^\top \beta^{\star}) x_{s} - Z^\star } . 
\end{aligned}
\]
Therefore we get
\[
\begin{aligned}
\norm{-\sfrac{1}{\lambda}\ssum_{s=1}^t \eta_s^{(t)} \theta_s x_{s} - Z^\star }_\infty
&\leq \sfrac{B^2 L}{\lambda} \msum_{s=1}^t \eta_s^{(t)} \norm{\beta_s-\beta^{\star}}+ \norm{ \sfrac{1}{\lambda}\ssum_{s=1}^t \eta_s^{(t)} z_s  }_\infty ,
\end{aligned}
\]
where  $z_s \eqdef  f_{y_{s}}'(x_{s}^\top\beta^{\star}) x_{s} - \EE[ f_{y_{s}}'(x_{s}^\top\beta^{\star}) x_{s} ]$ and we used $\norm{x_s}\leq B$ and $f_y'$ is Lipschitz with constant $L$. 
\rev{ Taking expectations,
\[
\begin{aligned}
\EE \norm{-\sfrac{1}{\lambda}\ssum_{s=1}^t \eta_s^{(t)} \theta_s x_{s} - Z^\star }_\infty
&\leq \sfrac{B^2 L}{\lambda} \msum_{s=1}^t \eta_s^{(t)} \EE \norm{\beta_s-\beta^{\star}}+ \EE \norm{ \sfrac{1}{\lambda}\ssum_{s=1}^t \eta_s^{(t)} z_s  }_\infty. 
\end{aligned}\]}
By the Silverman-Toeplitz theorem \cite[Thm 1.1]{natarajan2017classical}, since  $\sum_s  \eta^{(t)}_s  = 1$ and $\norm{\beta_s-\beta^{\star}} \to 0$, $ \sum_{s=1}^t \eta_s^{(t)} \norm{\beta_s-\beta^{\star}} \to 0$ as $t\to\infty$. 
Fix $k\in [n]$, and define for $v\leq t$,
\[
Y_v^k \eqdef \msum_{s=1}^v \eta_s^{(t)}  (z_s)_k . 
\]
This is a martingale with bounded difference \[\abs{Y_v^k- Y_{v-1}^k} = \eta_v^{(t)} \abs{(z_v)_k} \leq 2 \abs{f_{y_v}'(x_v^\top \beta^{\star}) (x_v)_k} \eta_v^{(t)} \leq 2 B G \eta_v^{(t)} . \] By Azuma-Hoeffding inequality we get
$
\PP\Pa{ \abs{Y_t^k} \geq v} \leq 2\exp\Pa{ -\sfrac{v^2}{8B^2 G^2 \sum_{s=1}^t( \eta_s^{(t)})^2}  } 
$.
 Therefore, by the union bound,
$
\PP\Pa{\max_k \abs{Y_t^k} \geq v } \leq 2 n \exp\Pa{ -\sfrac{v^2}{8G^2 B^2 \sum_{s=1}^t( \eta_s^{(t)})^2}  }$ .
The RHS converges to 0 as $t\to +\infty$ by (iii) of Lemma \ref{lem:mairal} \rev{and
$
\EE[\norm{\ssum_{s=1}^t \eta_s^{(t)} z_s}_\infty] \lesssim n \sqrt{\ssum_{s=1}^t( \eta_s^{(t)})^2} 
$}
%
\rev{
and the following bound
\[
\begin{aligned}
\EE\norm{ -\tfrac{1}{\lambda} \ssum_{s=1}^t \eta_s^{(t)} \theta_s x_{s} - Z^\star  }_\infty & \lesssim  n \sqrt{\ssum_{s=1}^t( \eta_s^{(t)})^2} + \msum_s \eta_s^{(t)} \EE\norm{\beta_s - \beta^{\star}} . 
\end{aligned}\]
}
\vspace{-12mm}
\paragraph{Almost sure convergence} We apply Lemma \ref{supermartingale}. Let $Y_t = \abs{\bar Z_t - Z^\star}^2$, then,
\[
\begin{aligned}
\EE_{t-1}[Y_t ] &= (1-\mu_t) Y_{t-1} +  \mu_t^2 \EE_{t-1} \Pa{ -\sfrac{1}{\lambda}\theta_t x_t  - Z^\star }^2 
+(1-\mu_t) \mu_t Y_{t-1} \EE_{t-1} \Pa{ -\sfrac{1}{\lambda}\theta_t x_t  - Z^\star } .
\end{aligned}\]
Recall that $Z^\star = -\frac{1}{\lambda} \EE_{(x,y)}[f'(x^\top \beta^\star,y) x]$ and since $\norm{\beta_s} = \Oo(1)$ almost surely, letting $B_t \eqdef\EE_{t-1} \Pa{ -\sfrac{1}{\lambda}\theta_t x_t  - Z^\star }^2 =  \EE_{(x,y)}[(-\lambda^{-1}f_y'(x^\top \beta_t) x - Z^\star)^2]$ where we recall that $\theta_t = f'_{y_t}(x_t^\top \beta_t)$, $B_t = \Oo(1)$ almost surely.
\[
\begin{aligned}
\EE_{t-1}[Y_t ] &\leq (1-\mu_t)^2 Y_{t-1}  + \mu_t^2 B_t + (1-\mu_t) \mu_t Y_{t-1}  \sfrac{1}{\lambda} \EE_{(x,y)}[ \abs{(f'(x^\top \beta_t,y)- f'(x^\top \beta^\star,y)) x}]\\
&\leq (1-\mu_t)^2 Y_{t-1}  + \mu_t^2 B_t + \sfrac{LB^2}{\lambda} (1-\mu_t) \mu_t Y_{t-1} \norm{\beta_t - \beta^\star}\\
&=(1-\mu_t)(1-\mu_t + LB^2 \lambda^{-1} \mu_t\norm{\beta_t-\beta^\star}) Y_{t-1}  + \mu_t^2 B_t  . 
\end{aligned}\]
where we used the fact that $f_y'$ is $L$-Lipschitz and $\norm{x}\leq B$ for the second inequality.
Since $\beta_t\to \beta^\star$, for $t$ sufficiently large, $1-\mu_t + LB^2 \lambda^{-1} \mu_t\norm{\beta_t-\beta^\star} \in (0,1)$, so we can apply Lemma \ref{supermartingale} to conclude that $Y_t$ converges almost surely to 0.

\rev{
\paragraph{Convergence of regularization term}
Let us also establish the following convergence 
\[
-\msum_{s=1}^t f'_{y_s}(\dotp{x_s}{\beta_s})\dotp{x_s}{\beta_s} \to \lambda\Omega(\beta^{\star})  = -\EE f'_y(\dotp{x}{\beta^{\star}})\dotp{x}{\beta^{\star}} .
\]
Note that
\[
\begin{aligned}
&\abs{-\ssum_{s=1}^t \eta_s^{(t)} f'_{y_s}(\dotp{x_s}{\beta_s})\dotp{x_s}{\beta_s} -\lambda \Omega(\beta^{\star})} \\ 
&= \abs{ \ssum_{s=1}^t \eta_s^{(t)} f'_{y_s}(\dotp{x_s}{\beta_s})\dotp{x_s}{\beta_s}-\dotp{Z^\star}{\beta^{\star}}}\\
&\leq \abs{ \ssum_{s=1}^t \eta_s^{(t)} f'_{y_s}(\dotp{x_s}{\beta_s})\dotp{x_s}{\beta_s-\beta^{\star}}} + \abs{\ssum_{s=1}^t \eta_s^{(t)} f'_{y_s}(\dotp{x_s}{\beta_s})\dotp{x_s}{\beta^{\star}}-\dotp{Z^\star}{\beta^{\star}}}\\
&= \abs{ \ssum_{s=1}^t \eta_s^{(t)} f'_{y_s}(\dotp{x_s}{\beta_s})\dotp{x_s}{\beta_s-\beta^{\star}}} + \abs{\dotp{\bar Z_t}{\beta^{\star}}-\dotp{Z^\star}{\beta^{\star}}}\\
&\leq  B \msum_{s=1}^t \eta_s^{(t)} \abs{f_{y_s}'(x_s^\top \beta_s)} \norm{\beta_s-\beta^{\star}} + \norm{\bar Z_t - Z^\star} \norm{\beta^{\star}} 
\end{aligned}\]
which converges to 0 almost surely since almost surely we have $\beta_s \to \beta_*$ and 
\[
\abs{f_{y_s}'(x_s^\top \beta_s)}  
\leq \abs{f_{y_s}'(x_s^\top \beta_s) - f_{y_s}'(x_s^\top \beta^{\star})} + \abs{f_{y_s}'(x_s^\top \beta^{\star})} \leq L B \norm{\beta_s - \beta^{\star}} + G = \Oo(1) . 
\] 
Moreover, since $\EE[\dotp{\bar Z_t - Z^\star}{\beta^{\star}}] = 0$ and $\abs{f_{y_s}'(x_s^\top \beta^{\star})  x_s^\top \beta^{\star}} \leq G B\norm{\beta^{\star}}\eqdef B_0$, we arrive at
\[
\PP\Ppa{\abs{\dotp{\bar Z_t - Z^\star}{\beta^{\star}} } \geq v} \leq 2 \exp\BPa{-\sfrac{v^2}{8B_0^2 \sum_{j=1}^n(\eta_j^{(n)})^2  }} .
\]
\paragraph{Convergence of the duality gap}
To show convergence of $\gap_t(\bar \beta_t) \to 0$, recall that $\theta_s = f_{y_s}'(x_s^\top \beta_s)$ and by the Fenchel duality, 
\begin{equation}\label{eq:fenchel-dual-conseq}
\bar D^{(t)}((\theta_s)_{s\leq t}) 
= - \msum_{s=1}^t \eta_s f_{y_s}^*(\theta_s)
= \msum_{s=1}^t  \eta_s \bPa{ f_{y_s}(\dotp{x_s}{\beta_s})-  f_{y_s}'( \dotp{x_s}{\beta_s}) \dotp{x_s}{\beta_s} }.
\end{equation}
By adding and subtracting $ \bar P^{(t)}( \beta^{\star})  $ and using \eqref{eq:fenchel-dual-conseq},
\[
\begin{aligned}
\gap_t(\bar \beta_t)
&=
\bar P^{(t)}(\bar \beta_t) - \bar D^{(t)}((\theta_s)_{s\leq t}) \\
&=
\bar P^{(t)}(\bar \beta_t)  - \bar P^{(t)}( \beta^{\star}) + 
\msum_{s=1}^t  \eta_s  f_{y_s}(\dotp{x_s}{\beta^{\star}}) - \msum_{s=1}^t  \eta_s  f_{y_s}(\dotp{x_s}{\beta_s})\\
&\qquad +\lambda \Omega( \beta^{\star})  +  \msum_{s=1}^t  \eta_s   f_{y_s}'( \dotp{x_s}{\beta_s}) \dotp{x_s}{\beta_s} .
\end{aligned}\]
Note that, by the mean value theorem, for some $\xi_s$  between $x_s^\top \beta_s$ and $x_s^\top \beta^{\star}$,
\[
\begin{aligned}
&\abs{\ssum_{s=1}^t \eta_s f_{y_s}(x_s^\top\beta^{\star}) - f_{y_s}(x_s^\top \beta_s)}\\
&= \abs{\ssum_{s=1}^t \eta_s f_{y_s}'(\xi_s) (x_s^\top(\beta_s-\beta^{\star}))}\\
&=  \abs{\ssum_{s=1}^t \eta_s \pa{ f_{y_s}'(\xi_s) - f_{y_s}'(0) } (x_s^\top(\beta_s-\beta^{\star})) 
+ \ssum_{s=1}^t \eta_s  f_{y_s}'(0) (x_s^\top(\beta_s-\beta^{\star})) }\\
&\leq \msum_{s=1}^t \eta_s^{(t)} L B^2 \norm{\beta_s - \beta^{\star}}^2 + \msum_s G B \eta_s^{(t)}  \norm{\beta_s - \beta^{\star}} . 
\end{aligned}\]
The same argument can be applied to show
\[
\abs{\bar P^{(t)} (\bar \beta_t)  - \bar P^{(t)} ( \beta^{\star}) } 
\leq   L B^2 \norm{\bar \beta_t - \beta^{\star}}^2 +  G B\norm{\bar \beta_t - \beta^{\star}} . 
\]
As a result,
\[
\begin{aligned}
\abs{\gap_t(\bar \beta_t)}
&\leq C( \norm{\bar \beta_t - \beta^{\star}} )+  \msum_{s=1}^t \eta_s C(\norm{\beta_s - \beta^{\star}})+\abs{\lambda \Omega( \beta^{\star})  +  \msum_{s=1}^t  \eta_s   f_{y_s}'( \dotp{x_s}{\beta_s}) \dotp{x_s}{\beta_s} }
\end{aligned}\]
where $C(x) \eqdef  L B^2 x^2 +  G Bx $.
Convergence follows from (i).\qedhere
}

\end{proof}

\subsection{Proofs for Section \ref{sec:onlinescreen}}

\begin{proof}[Proof of Lemma \ref{lem:radius_gap}]
 Since $f_{y_{s}}$ is $L$-Lipschitz smooth, it follows that $ f_{y_{s}}^*$ is $\frac1L$-strongly convex, \rev{and for any $g_s^\star\in \partial f_{y_{s}}^*( \theta^{\star,(t)}_s)$}
\begin{equation}\label{eq:b1}
\begin{aligned}
\sfrac{1}{2L} \abs{ \theta_{s} - \theta^{\star,(t)}_s}^2 & \leq -f_{y_{s}}^*( \theta^{\star,(t)}_s) + f_{y_{s}}^*( \theta_{s}) - ( \theta_{s} - \theta^{\star,(t)}_s)  \rev{g_s^\star}\\
& \leq -f_{y_{s}}^*( \theta^{\star,(t)}_s) + f_{y_{s}}^*( \theta_{s}) - ( a_t \theta_{s} - \theta^{\star,(t)}_s) \rev{g_s^\star}  - (1-a_t)   \theta_{s}  \rev{g_s^\star}
\end{aligned}
\end{equation}
where $a_t = \min\pa{1,1/\Omega^D(\bcert)}$ is such that
$
a_t (\theta_s)_{s\leq t} \in \Kk_{\lambda,\eta^{(t)}}$, the dual constraint set defined in \eqref{eq:finitesumdual}.
Since $\theta^{\star,(t)}$ is a dual optimal point, by Fermat's rule
\[
-\msum_{s\leq t} \eta_s^{(t)} ( a_t\theta_{s} - \theta^{\star,(t)}_s)  \rev{g_s^\star} \leq 0  . 
\]
Therefore,
 multiplying \eqref{eq:b1} by $\eta_s^{(t)}$ and summing from $s=1,\ldots, t$, we obtain
\begin{equation}\label{eq:comp}
\begin{aligned}
\sfrac{1}{2L} \msum_{s=1}^t \eta^{(t)}_s \abs{ \theta_{s} - \theta^{\star,(t)}_s}^2  
\leq  &\msum_{s=1}^t \eta^{(t)}_s \Pa{-f_{y_{s}}^*( \theta^{\star,(t)}_s) + f_{y_{s}}^*( \theta_{s})} - (1-a_t) \msum_{s=1}^t \eta_s^{(t)}    \theta_{s}  \rev{g_s^\star}. 
\end{aligned}
\end{equation}
Note that by optimality of $\theta^{\star,(t)}$,  $ \sum_{s=1}^t \eta^{(t)}_s \bPa{-f_{y_{s}}^*( \theta^{\star,(t)}_s)} \leq  \bar P^{(t)}(\beta)$
for all $\beta\in\RR^d$. \rev{Therefore, the first sum in the RHS of \eqref{eq:comp} is bounded by $\gap_t(\beta)$.} 
\rev{To bound the second sum, we make the choice  $g_s^\star = x_s^\top \beta^{\star,(t)} $:
 Recall that since $f_{y_s}$ is proper, closed and convex, $v =   f_{y_s}'(x)$ if and only if $x\in \partial f_{y_s}^*(v)$ \citep[Prop 11.3]{rockafellar2009variational}. Hence for  $\beta^{\star,(t)} \in \Argmin_\beta \bar P^{(t)}(\beta)$, we have $ \theta^{\star,(t)}_s =  f_{y_s}'(x_s^\top \beta^{\star,(t)})$ and $x_s^\top \beta^{\star,(t)} \in \partial f_{y_s}^*(\theta^{\star,(t)}_s)$. We can therefore choose $g_s^\star = x_s^\top \beta^{\star,(t)} $. It then follows that
}
\[
\abs{\msum_{s=1}^t \eta_s^{(t)}   \rev{g_s^\star} \theta_s} 
=\abs{ \pa{  \ssum_{s=1}^t \eta_s^{(t)}  \theta_s x_s }^\top \beta^{\star,(t)} } \rev{\leq \Omega^D\Pa{\ssum_{s=1}^t \eta_s^{(t)}  \theta_s x_s}\cdot \Omega(\beta^{\star,(t)})}.
\]
By optimality of $\beta^{\star,(t)}$, we have \finalrev{
$
\Omega(\beta^{\star,(t)}) \leq \sfrac{1}{\lambda} \bar P^{(t)}(\hat \beta)$ for any $\hat \beta$}. 
Plugging these estimates back into \eqref{eq:comp} yields, for any $\beta\in \RR^n$,
\begin{equation}\label{eq:comp2}
\sfrac{1}{2L} \msum_{s=1}^t \eta^{(t)}_s \abs{ \theta_{s} - \theta^{\star,(t)}_s}^2  \leq \gap_t(\beta) + \finalrev{\bar P^{(t)}(\hat \beta)} \; ( \Omega^D(\bcert)-1)_+ 
\end{equation}
since $(1-a_t) \Omega^D(\bcert) =  ( \Omega^D(\bcert)-1)_+ $.
Finally, by the Cauchy-Schwarz inequality,
\[
\begin{aligned}
\Omega_g^D\bPa{\sfrac{1}{\lambda} \ssum_{s=1}^t \eta^{(t)}_s  ( \theta_s x_{s} -\theta^{\star,(t)}_s x_{s})}
&= \sfrac{1}{\lambda} \sup_{\Omega_g(z) \leq 1} \dotp{\ssum_{s=1}^t \eta^{(t)}_s  ( \theta_s x_{s} -\theta^{\star,(t)}_t x_{s})}{z}\\
&\leq \sfrac{1}{\lambda} \sqrt{\ssum_{s=1}^t \eta^{(t)}_s \abs{ \theta_s -\theta^{\star,(t)}_s}^2} \sup_{\Omega_g(z) \leq 1}   \sqrt{ \ssum_{s=1}^t \eta^{(t)}_s \abs{\dotp{ x_{s}}{z}}^2}\\
&\leq \sfrac{1}{\lambda} \sqrt{\ssum_{s=1}^t \eta^{(t)}_s \abs{ \theta_s -\theta^{\star,(t)}_s}^2}  \sqrt{ \ssum_{s=1}^t \eta^{(t)}_s \Omega^D_g( x_{s})^2}
\end{aligned}\]
and the result follows by combining this with \eqref{eq:comp2}. 
\end{proof}

\begin{small}
\setlength{\bibsep}{0pt plus 0.3ex}
\bibliographystyle{chicago}
\bibliography{bib}
\end{small}

\end{document}